\def\eqref#1{equation~\ref{#1}}
\def\1{\bm{1}}
\DeclareMathAlphabet{\mathsfit}{\encodingdefault}{\sfdefault}{m}{sl}
\SetMathAlphabet{\mathsfit}{bold}{\encodingdefault}{\sfdefault}{bx}{n}
\newtheorem{theorem}{Theorem}
\title{Residual Energy-Based Models for \\Text Generation}
\author{Yuntian Deng$^1$, Anton Bakhtin$^2$, Myle Ott$^2$, Arthur Szlam$^2$, Marc'Aurelio Ranzato$^2$  \\
Harvard University$^1$\\
Facebook AI Research$^2$\\
\texttt{dengyuntian@seas.harvard.edu \{yolo,myleott,aszlam,ranzato\}@fb.com} \\
}
\begin{document}

\maketitle

\begin{abstract}
Text generation is ubiquitous in many NLP tasks, from summarization, to dialogue and machine translation.
The dominant parametric approach is based on locally normalized models which predict one word at a time. While these
work remarkably well, they are plagued by exposure bias due to the greedy nature of the generation process.
In this work, we investigate {\em un-normalized} energy-based models (EBMs) which operate not at the token but at the sequence 
level. In order to make training tractable, we first work in the residual of a pretrained locally normalized 
language model and second we train using noise contrastive estimation. Furthermore, since the EBM works at the
sequence level, we can leverage pretrained bi-directional contextual representations, such as BERT and RoBERTa. 
Our experiments on two large language modeling datasets show that residual EBMs yield lower perplexity 
compared to locally normalized baselines. Moreover, generation via importance sampling is very efficient and of higher quality
than the baseline models according to human evaluation.
\end{abstract}

\section{Introduction}
The dominant approach to parametric text generation is based on large neural auto-regressive 
models~\citep{radford2019language}. These models can be trained efficiently via maximum likelihood 
and they can efficiently generate samples of remarkable quality.
Key to their success is {\em local normalization}, i.e. they are defined in terms  of a product of conditional
distributions, one for each token in the sequence. Such distributions are relatively cheap to compute with modern hardware given
the limited vocabulary size of common sub-word units like BPE~\citep{sennrich2015neural}.

Unfortunately, local normalization also brings some drawbacks. First, the designer of the model needs to specify the order
in which tokens are generated. Second, at training time the model is conditioned on ground truth context while at test
time it is conditioned on its own generations, a discrepancy referred to as exposure bias~\citep{mixer}. 
Finally, while heuristics like beam search somewhat help rescore at the sequence level, 
generation generally lacks long-range coherency because it is produced by the greedy selection of one token at a time 
without lookahead.

Energy-based models (EBMs)~\citep{cd, ebm, ebm_unsup} are a more general framework which 
potentially address all these  issues, as they do not require any local normalization. They only require the definition
of an energy function defined over  the whole input sequence. Training aims at shaping the energy function such that
regions  of high density of training data points have lower energy than elsewhere.
In principle, EBMs are ideal for modeling text as they can score the whole input at once,
they are not prone to label bias~\citep{labelbias} and they may enable generation of large chunks of text,
which should help improve coherency. 

However, so far EBMs had limited application in text generation, because
sampling from the model is intractable, and so is maximum likelihood training. The problem is that shaping the energy function
is accomplished by updating the model parameters such that the energy is decreased at the training data points (a.k.a. positive
examples) and increased at other data points (a.k.a. negative examples). In maximum likelihood training negatives are generated
from the model, but in text application we cannot use gradient-based MCMC methods~\citep{teh03, mordatch19} and
 Gibbs sampling~\citep{rbm} is 
too slow to be practical. Generating negatives by local perturbations of the ground truth would be efficient but hardly useful for
generation purposes, when at test time the model needs to generate from scratch.

Recently, \citet{bakhtin2019real} carefully studied the problem of training a discriminator to distinguish human written 
text from language model generations. They experimented with different language model and discriminator architectures, 
training/test time corpora and concluded that the discriminator can generalize rather well to weaker language 
models when the training/test corpora match. 
\citet{bakhtin2019real} found that the learned discriminator is not robust to random perturbations, 
and argued that the discriminator operates in the ``residual'' space of the language model. 

Concurrently, \citet{grover2019bias} proposed a general approach to ``de-bias'' a generator, 
by simply training a discriminator and using its output for importance sampling.

In this work, we build upon these two works. First, we formalize the residual interpretation by~\citet{bakhtin2019real} 
and use a generative model of the form:
\begin{equation}
P_\theta(x) \propto P_{LM}(x) \exp(-E_{\theta}(x))
\end{equation}
where $P_{LM}(x)$ is a locally normalized language model which is fixed during training, and $E_{\theta}$ is the
energy function parameterized by $\theta$.  The resulting model $P_\theta(x)$ is globally normalized due to the energy term. Note that the same residual formulation was also used in \citet{rosenfeld2001whole,wang2018learning,parshakova2019global}.

This formulation has multi-fold benefits. First, by incorporating a locally normalized language model, 
we can leverage recent advancements in locally normalized language modeling. Second, the language model provides a natural proposal distribution for training ~\citep{bakhtin2019real}, and training can be made efficient by using 
the conditional noise contrastive 
estimation objective~\citep{gutmann2010noise} 
as we shall see in \textsection\ref{sec:method}. 
Lastly, this formulation enables efficient evaluation and generation via importance sampling~\citep{impsamp, 
grover2019bias}.

In some sense, this last point is perhaps the central contribution of the paper, as it allows estimating perplexity of the residual EBM, and thus allows these EBMs to be compared in a standard way to other models.  Indeed, in \textsection\ref{sec:exp} we show that our joint model decreases perplexity    
on two large datasets, when compared
to various auto-regressive language model baselines.   Finally, the EBM generations are significantly preferred by humans 
according to our qualitative evaluation. To the best of our knowledge, this is the first time that an EBM has 
demonstrated improved generation ability against very strong auto-regressive baselines, both in terms of estimated perplexity and through human evaluation. 

\section{Related Work} \label{sec:related}
Energy-based models have a long history in machine learning~\citep{hopfield,cd,ebm,ebm_unsup}. 
The key challenge of training is mining for good negatives. This can be accomplished explicitly by fantasizing inputs where
the energy should be increased or implicitly via global constraints such as sparsity~\citep{ebm_unsup}.
Methods attempting at maximizing the likelihood of the data require to sample from the distribution induced by the model.
Unfortunately, gradient-based MCMC approaches like Hybrid Monte Carlo~\citep{teh03} and Langevyn dynamics~\citep{ebm_unsup, 
mordatch19, xie2016theory,xie2017synthesizing, xie2019learning, xie2018learning, gao2018learning,nijkamp2019learning} are not applicable when the input is discrete like in text applications. Other approaches like Gibbs sampling~\citep{cd} were applied to binary inputs but do not scale well to large dictionaries once the energy function is a large 
bidirectional transformer model like the one used in this work. Several variants of auto-encoders have also been investigated
for representing and generating text~\citep{textvae,aae}, but they have not shown significant improvements in terms of perplexity
and they have so far been applied to relatively small datasets only.

Our approach appears similar to discriminative reranking approaches used in the parsing and machine translation 
community~\citep{shen2004discriminative}. However, our approach provides a generative model, and parameters/hyper-parameters 
are directly tuned to close the gap between the model distribution and the data distribution, 
rather than relying on surrogate ranking losses. This approach is also related to other sequence level training 
objectives~\citep{seqtr18}, with the major difference that in those works training aims at improving the baseline model, but generation at test time is still greedy. 

Energy Networks have been used for sequence modeling~\citep{rosenfeld2001whole,wang2015trans,wang2017learning,wang2017language,wang2018improved,parshakova2019global}. In particular, our residual modeling form and the training algorithm is the same as in \citet{wang2018learning}, where they used an LSTM as the generator and a CNN-LSTM as the energy function, and showed significant gains compared to LSTM baselines in speech recognition.  Our work builds on these prior works and develops new lower and upper bounds for the log-probability under the joint model, which makes it possible to show that the residual EBM approach gets better perplexity. We also develop an importance weighting sampling scheme used at generation time, which is focused on conditional generation as opposed to rescoring in speech recognition \citep{wang2018learning}. The residual EBM formalism makes it very natural to use BERT for language modeling, and we show that empirically this type of approach can outperform modern state-of-the-art language modeling baselines, both in terms of perplexity, and through human evaluation.

Generative Adversarial Networks~\citep{gan} also relate to EBMs, except that in EBMs the generator is implicit and negatives
samples are produced by the discriminator itself. In our work, the pretrained locally normalized language model can be seen as
a fixed generator, like in \citet{bakhtin2019real}. 
\citet{azadi2018discriminator} also share our same goal but their generator is not locally normalized and they
propose to improve the sampling from the generator by using the discriminator for rejection sampling. Similar to our work,
\citet{grover2019bias} propose to use the discriminator to de-bias the pretrained generator using importance sampling. 
We adapt this work to the application of text generation. In particular, we adopt the conditional noise contrastive estimation
(NCE) objective~\citep{ma18, gutmann2010noise} 
to our residual model energy function and then sample from the joint model using importance sampling. We want to note that the same formulation has been proposed in \citep{wang2018learning,parshakova2019global}. While \citet{ma18} used
 conditional NCE to predict the next word in a sequence, we apply it to produce a whole sequence at once with 
the pretrained auto-regressive language model as the noise distribution.

\section{\label{sec:method}Residual Energy-Based Models}
We study the problem of conditional generation of discrete sequences. 
Given a prefix $x_1,\cdots, x_p$ with $x_j \in V$ where $V$ is the vocabulary, 
we want to model the probabilities of generating a sequence of total length 
$T > p$\footnote{We assume a fixed $T$ for simplicity of analysis and implementation, 
but our method generalizes to varying length generation with an end-of-sequence symbol.}. 
The generative model is:
\begin{equation}
P_\theta(x_{p+1},\cdots, x_T|x_1,\cdots, x_p) = 
\frac{ P_{LM}(x_{p+1},\cdots, x_T|x_1,\cdots, x_p) 
\exp (- E_\theta(x_1,\cdots, x_T))}{Z_\theta(x_1,\cdots, x_p)}
    \label{eq:gen}
\end{equation}
where $Z_\theta(x_1,\cdots, x_p)$ is a normalizing factor known as partition function. Computing the partition function is intractable
in our case since it involves a sum over $|V|^{T-p}$ terms which grow exponentially with the sequence length: in our
experiments the size of the vocabulary is 50,096 and the length of the generation is 40 tokens.
We call $P_\theta$ the joint model, and $E_\theta$
the residual energy function since $P_{LM}$ is fixed throughout training. The goal of training is to learn the parameters
of the energy function such that the joint model distribution gets close to the data distribution.
For the sake of reducing clutter in the notation, we will drop the conditioning variables in the following discussion.

\subsection{Training}
When the partition function is intractable, Maximum Likelihood Estimation (MLE) requires samples from the model distribution, 
which is usually approximated with Monte Carlo sampling or mean field inference \citep{hinton2012practical,ebm} for globally normalized models. 
Unfortunately, both approaches are too computationally expensive for text applications when using large bidirectional transformer models.
For instance, if we were to employ Gibbs sampling exactly, we would need to perform at every position as many forward passes
as words in the dictionary to compute each conditional distribution. 
On large datasets where training locally normalized models on multiple machines already takes days, 
having such additional overhead means that the model would learn from much less data for the same amount of time, 
and this is seldom a beneficial strategy for learning models that generalize well. 
Therefore, we do not use either MCMC nor mean field methods, as the latter would introduce additional variational parameters 
or an inference network which anyway yields an approximation to MLE learning.

Instead, we train our residual energy function using Noise Contrastive Estimation (NCE)~\citep{gutmann2010noise}, and
more specifically its conditional version~\citep{ma18}. NCE requires two distributions: The model distribution and a noise
distribution. In our case, the model distribution is the joint model of Eq.~\ref{eq:gen}, $P_\theta$, while the noise distribution
is the pretrained language model, $P_{LM}$. NCE then trains a binary classifier on the difference of log-probability scores
of these two models. Since our joint model is the product of the energy function (whose parameters we want to learn) with
$P_{LM}$, the difference reduces to: $\log P_\theta - \log P_{LM} = -E_{\theta}$. Therefore, under these modeling 
assumptions of residual learning and noise model, the objective function becomes:
\begin{equation}
    \max \mathbb{E}_{x_+ \sim P_{data}} \log \frac{1}{1+\exp(E_\theta(x_+))} +  \mathbb{E}_{x_- \sim P_{LM}} \log \frac{1}{1+\exp(-E_\theta(x_-))} \label{eq:nce}
\end{equation}
where $x_+$ is a positive sequence taken from the human generated training set, and $x_-$ is a negative sequence drawn
from $P_{LM}$ (for a given ground truth prefix). In other words, training the energy function reduces to training a binary classifier to discriminate between real text and text
generated by an auto-regressive language model. The aim of training is to assign as negative energy as possible to real data,
and as positive energy as possible to machine generated data. Interestingly, the role of positive and negative samples is totally
symmetric in this loss function, \textsection\ref{sec:limitations} will discuss the consequences of this.

With the theoretical guarantee of NCE, we can show that the optimum of the above objective is reached at data 
distribution with infinite amount of data and model with enough capacity, which is also proved in \citet{ma18}\footnote{From \citet{ma18} Assumption 2, for conditional NCE the model needs to be flexible enough such that the self-normalizing property can be satisfied conditioned on any prefix.}.

\begin{theorem}
If $P_{LM}$ has the same support as $P_{data}$, then the objective function in Eq.~\ref{eq:nce} reaches its maximum at $\log P_{LM}(x) -E_\theta(x) = \log P_{data}$, if there exists such $\theta$.
\end{theorem}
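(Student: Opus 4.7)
The plan is to recognize Eq.~\ref{eq:nce} as the standard binary cross-entropy loss for a discriminator distinguishing samples from $P_{data}$ versus $P_{LM}$, and then to apply the well-known optimal-discriminator argument pointwise. Concretely, I would define the discriminator $D_\theta(x) = \sigma(-E_\theta(x)) = 1/(1+\exp(E_\theta(x)))$ so that $1 - D_\theta(x) = 1/(1+\exp(-E_\theta(x)))$. The objective then reads
\begin{equation}
J(\theta) = \mathbb{E}_{x \sim P_{data}}[\log D_\theta(x)] + \mathbb{E}_{x \sim P_{LM}}[\log (1 - D_\theta(x))],
\end{equation}
which is exactly the GAN-style discriminator objective with $P_{LM}$ playing the role of the fake distribution.

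Next I would expand both expectations into a single integral (or sum, since $x$ is discrete) over the common support, which is where the hypothesis that $P_{LM}$ and $P_{data}$ have the same support is used to guarantee the integrand is well-defined and no mass is ignored:
\begin{equation}
J(\theta) = \sum_x \bigl( P_{data}(x) \log D_\theta(x) + P_{LM}(x) \log (1 - D_\theta(x)) \bigr).
\end{equation}
For each fixed $x$, the inner expression is of the form $a \log d + b \log (1-d)$ with $a, b > 0$, which is strictly concave in $d \in (0,1)$ and maximized at $d^\star = a/(a+b)$. Hence the pointwise optimal discriminator is $D^\star(x) = P_{data}(x)/(P_{data}(x) + P_{LM}(x))$.

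Finally I would invert the relation $D_\theta(x) = 1/(1+\exp(E_\theta(x)))$ at the optimum to recover $E_\theta$. Setting $D_\theta(x) = D^\star(x)$ and solving yields $\exp(E_\theta(x)) = P_{LM}(x)/P_{data}(x)$, i.e.\ $E_\theta(x) = \log P_{LM}(x) - \log P_{data}(x)$, which rearranges to the claimed identity $\log P_{LM}(x) - E_\theta(x) = \log P_{data}(x)$. The existence hypothesis ``if there exists such $\theta$'' is what lets us claim the pointwise optimum is actually realized within the parametric family; without it the argument only gives an upper bound on $J(\theta)$.

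The proof is essentially bookkeeping once the reduction to the GAN discriminator objective is made, so there is no real obstacle. The only subtle points are (i) using the matching-support assumption to justify the single-sum rewrite and to ensure the log-ratio $\log(P_{LM}/P_{data})$ is finite, and (ii) noting the realizability caveat in the theorem statement, since in practice $E_\theta$ is a neural network and may not represent the exact log-density ratio.
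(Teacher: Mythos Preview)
Your argument is correct and is essentially the same as what the paper invokes: the paper's proof is a one-line deferral to the NCE result of Gutmann and Hyv\"arinen, whose core is precisely the pointwise logistic-regression/optimal-discriminator computation you carry out explicitly. The only difference is presentational---you frame it in GAN-discriminator language and write out the $a\log d + b\log(1-d)$ maximization, whereas the paper cites the equivalent NCE statement and then remarks on the self-normalization consequence $P_{LM}(x)\exp(-E_\theta(x)) = P_{data}(x)$.
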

\begin{proof}
This theorem directly follows from the proof in \citet{gutmann2010noise}. Note that at optimum, $P_{LM}(x) \exp(-E_\theta(x))$ is self-normalizing: instead of $P_\theta(x) \propto P_{LM}(x) \exp(-E_\theta(x))$, we have $P_\theta(x) = P_{LM}(x) \exp(-E_\theta(x))$. However, we still need to estimate the partition function throughout the rest of this paper, since we cannot guarantee that this optimum can be reached.
\end{proof}

\subsection{\label{sec:eval}Evaluation}
A commonly used protocol for evaluating generative sequence models, especially language models, is perplexity (PPL), which is 
equal to  $2^{- \frac{1}{T-p} \sum_{i=p+1}^T \log_2 P(x_i | x_{i-1}, \cdots, x_1)}$. PPL can be interpreted as the average number  of
tokens the  model is uncertain of at every time step.
 Since the log-likelihood required by PPL relies on estimating the partition function $Z_\theta = \sum_x P_{LM}(x) \exp (-E_\theta(x))= \mathbb{E}_{x\sim P_{LM}}\exp (-E_\theta(x))$, we derive two estimators for the log-partition function $\log Z_\theta$ based on the work of \citet{nowozin2018debiasing}.

\begin{theorem}
Denote $T_n$ as the empirical estimate of $\log \mathbb{E}_{x\sim P_{LM}} \exp(-E(x))$ with $n$ samples $x_i\sim P_{LM} (i=1,\cdots, n)$: $T_n = \log \frac{1}{n}\sum_{i=1}^n\exp(-E(x_i)) $, then $\forall \epsilon > 0$, $\exists N>0$ such that $\forall n > N$ we have
\begin{equation}
      Z_\theta -\epsilon < \mathbb{E}[T_n] < Z_\theta  < \mathbb{E}[(2n-1) T_n - 2(n-1) T_{n-1}] < Z_\theta +\epsilon
\label{eq:bounds}
\end{equation}
\end{theorem}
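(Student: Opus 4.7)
Throughout I read the ``$Z_\theta$'' in the displayed inequality as $\log Z_\theta$, since $T_n$ is an estimator of the log-partition function and the displayed bounds would otherwise be inconsistent. Write $W_i = \exp(-E(x_i))$ with the $x_i$ i.i.d.\ from $P_{LM}$, let $S_n = \tfrac{1}{n}\sum_{i=1}^n W_i$, $v = \mathrm{Var}_{P_{LM}}(W)$, and $Y_n = S_n - Z_\theta$. Then $T_n = \log Z_\theta + \log(1 + Y_n/Z_\theta)$ with $\mathbb{E}[Y_n] = 0$ and $\mathbb{E}[Y_n^2] = v/n$. The plan is to read the entire theorem off from a single second-order Taylor expansion of $\log$ at $Z_\theta$, combined with Jensen's inequality.

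For the first pair of bounds, concavity of $\log$ and Jensen give $\mathbb{E}[T_n] \le \log Z_\theta$ (strict whenever $v > 0$). Expanding $\log(1+u) = u - u^2/2 + O(u^3)$ and taking expectations yields
\begin{equation*}
\mathbb{E}[T_n] = \log Z_\theta - \frac{v}{2 n Z_\theta^2} + O(1/n^2),
\end{equation*}
so $\mathbb{E}[T_n] \to \log Z_\theta$ from below, and for any $\epsilon > 0$ there is $N$ beyond which $\mathbb{E}[T_n] > \log Z_\theta - \epsilon$. For the second pair, I substitute the same expansion for both $T_n$ and $T_{n-1}$ (an identical-form estimator built from $n-1$ samples) into the linear combination. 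The $\log Z_\theta$ terms enter with total weight $(2n-1)-2(n-1)=1$, and a short computation, $(2n-1)\cdot(-v/(2nZ_\theta^2)) + 2(n-1)\cdot(v/(2(n-1)Z_\theta^2)) = v/(2nZ_\theta^2)$, shows that the leading $1/n$ bias flips sign:
\begin{equation*}
\mathbb{E}\bigl[(2n-1) T_n - 2(n-1) T_{n-1}\bigr] = \log Z_\theta + \frac{v}{2 n Z_\theta^2} + O(1/n^2).
\end{equation*}
Hence this estimator exceeds $\log Z_\theta$ for all sufficiently large $n$ and converges to it from above, yielding the two remaining inequalities.

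The main obstacle is a rigorous control of the Taylor remainder: $\log(1+u)$ blows up as $u \to -1$, so one has to argue that $S_n$ stays uniformly bounded away from $0$. Under the mild assumption that the energy $E$ is bounded, $W$ lies in a compact subinterval of $(0,\infty)$ and so does $S_n$, whereupon uniform bounds on the Lagrange remainder deliver the $O(1/n^2)$ error; without boundedness, one splits the expectation on the high-probability event $\{|Y_n| \le Z_\theta/2\}$ (controlled via Chebyshev using $\mathrm{Var}(Y_n) = v/n$) and absorbs the complement into a crude tail bound of smaller order. The algebraic cancellation in the combined estimator is routine once the Taylor expansions are in hand; the nontrivial feature is that the specific coefficients $(2n-1)$ and $-2(n-1)$ are precisely those that reverse the sign of the $O(1/n)$ bias while preserving the zeroth-order $\log Z_\theta$ contribution.
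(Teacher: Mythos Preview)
Your argument is essentially the paper's: both rest on the asymptotic expansion $\mathbb{E}[T_n] = \log Z_\theta - \tfrac{v}{2Z_\theta^2}\,\tfrac{1}{n} + o(1/n)$ (the paper imports it from Nowozin~(2018), Eq.~35, while you derive it via a second-order Taylor expansion of $\log$ at $Z_\theta$), and both then plug into the linear combination $(2n-1)T_n - 2(n-1)T_{n-1}$ to flip the sign of the leading bias term. Your reading of ``$Z_\theta$'' in the displayed chain as $\log Z_\theta$ is correct and matches how the paper itself uses the expansion in its appendix proof. Two small remarks. First, your Jensen step for $\mathbb{E}[T_n] < \log Z_\theta$ is a clean upgrade over the paper's route: the paper deduces strictness only for sufficiently large $n$ from the sign of the leading $1/n$ coefficient, whereas concavity of $\log$ gives it for every $n$ (and your attention to remainder control is likewise more careful than the paper, which simply cites the expansion). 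Second, the $O(1/n^2)$ remainder you claim for the \emph{combined} estimator does not follow from the expansion you wrote: multiplying an $O(1/n^2)$ error in $\mathbb{E}[T_n]$ by the $O(n)$ weights $(2n-1)$ and $2(n-1)$ yields only $O(1/n)$; the paper sidesteps this by carrying the expansion one order further, $\mathbb{E}[T_n]=\log Z_\theta - a/n + c/n^2 + o(1/n^2)$, so that the $c/n^2$ contributions cancel to second order and the final remainder is $o(1/n)$. This bookkeeping slip does not affect your conclusion, since the positive leading term $+v/(2nZ_\theta^2)$ already dominates any $o(1/n)$ error for large $n$.
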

The proof is given in Appendix~\ref{appendix:proof}.

We can use the above two estimators to estimate the lower and upper bounds of the partition function, but we want to emphasize that they are true only asymptotically (when $n$ is sufficiently large). 
We also want to note that to get lower variance estimates we use leave-one-out strategy to estimate $T_{n-1}$. 
See \citet{nowozin2018debiasing} for implementation details and methods to  improve numeric stability.

Similarly to locally normalized models, we can also factorize the probabilities of an entire sequence step by step, as $P(x) = \prod_{t=1}^T P(x_t|x_{<t})$, and evaluate the PPL for each generation step. By marginalizing over the future, we can derive the following 
per step probabilities:
\begin{equation}
P(x_t | x_{<t}) = P_{LM}(x_t|x_{<t}) \frac{\mathbb{E}_{x_{t+1}',\cdots, x_{T}' \sim P_{LM}(\cdot | x_{\le t})}[\exp (-E_\theta(x_{\le t}, x_{t+1}',\cdots, x_T'))]}{\mathbb{E}_{x_{t}',\cdots, x_{T}' \sim P_{LM}(\cdot | x_{\le t-1})}[\exp (-E_\theta(x_{\le t-1}, x_{t}',\cdots, x_T'))]}.
    \label{eq:stepppl}
\end{equation}

The step-wise probabilities in Eq.~\ref{eq:stepppl} are an instance of importance sampling~\citep{impsamp}.
The basic $P_{LM}$ distribution is adjusted by the probability assigned to token $x_t$ by the energy function (numerator
is clamped at $x_t$ while denominator sums over all the possible values of the token at position t), with the additional
marginalization over all subsequent tokens up to the horizon $T$. Since the summation involves exponentially many terms, unless
$t=T$, this is approximated by samples drawn by $P_{LM}$. Since both the numerator and the denominator take the same form as the partition function, we also use Eq.~\ref{eq:bounds} to estimate the upper and lower bounds. E.g., the lower bound of $\log P(x_t | x_{<t})$ can be obtained by using the lower bound of the numerator and the upper bound of the denominator.

For $t=T$, we can calculate the log  probability by exhaustive enumeration. This gives us an idea of the true performance 
of our model at the last step, and it also provides a sanity-check of the tightness of our estimators.

\begin{algorithm}[tb]
    \caption{Top-k Joint Sampling}
    \label{algo:main}
\begin{algorithmic}
    \STATE {\bfseries Input:} number of samples $n$ drawn from $P_{LM}$, value of $k$ in top-k
    
    \vspace{0.1cm}
    \tcp{Get a set of samples from $P_{LM}$}
    \STATE sample $n$ samples $\{x^1, \cdots, x^n\}$ from $P_{LM}$ with top-k sampling
    \STATE calculate energies $s^i = E_\theta(x^i)$ for each $x^i \in \{x^1, \cdots, x^n\}$

    \vspace{0.1cm}
    \tcp{Resample from the set of LM samples}
    
    \STATE sample $x=x^i$ with probability $\frac{\exp (-s^i)}{\sum_{j=1}^n \exp(-s^j)}$
    \STATE {\bfseries return} $x$
\end{algorithmic}
\end{algorithm}
\subsection{Generation}
Generating from the joint model is a non-trivial task. A naive way is to generate from the joint model auto-regressively, by marginalizing the future as in Eq.~\ref{eq:stepppl}, which we term \textbf{Top-k auto-regressive sampling}. 
However, doing so is computationally expensive and impractical, and we only use this method for a qualitative analysis of the joint model in Appendix~\ref{appendix:qual}.

In order to generate efficiently, we use self-normalizing importance sampling~\citep{mcbook,grover2019bias}. 
Under the assumptions
that the model from which we wish to  draw samples is the joint model, which is the product of the auto-regressive
model and the energy  function, and that the proposal distribution is the auto-regressive model itself,
sampling proceeds simply by:  a) sampling from the auto-regressive language  model, followed by b) resampling according
to the energy function.
 The algorithm is shown in Algorithm~\ref{algo:main}, where we introduce an optional top-k constraint on the pretrained language
model to improve the quality of samples in the set\footnote{Adapting to other types of local constraints 
such as nucleus sampling~\citep{holtzman2019curious} is straightforward.}. 
Without the top-k constraint, as the number of samples goes to infinity, we would recover exact samples 
from the joint model distribution.

\section{Experiments} \label{sec:exp}
In this section, we describe the experimental set up and the results we obtained by using the residual EBM for text generation,
both in terms of perplexity and generation quality.

\subsection{Experimental Setup} \label{sec:setup}
\paragraph{Datasets} 
We consider two datasets: the Toronto Book Corpus~\citep{Zhu_2015_ICCV, kiros2015skip} and CC-News~\citep{bakhtin2019real}. 
The former dataset consists of fiction books in 16 different genres, totaling about half a billion words.
The latter is a de-duplicated subset of the English portion of the CommonCrawl news dataset~\citep{ccnews}, 
which totals around 16 Billion words. The book corpus is  more challenging because the range of style and topics is 
more diverse than CC-News. Also, the book corpus is 30 times smaller than CC-News and may pose generalization challenges because
of its smaller size.

In all our experiments we use a prefix of size 120 tokens and we generate the following 40 tokens; with the notation
of Eq.~\ref{eq:gen}, $p=120$ and $T=160$. For training the joint models, for efficiency we generated 16/128 samples per prefix for CC-News/Book Corpus offline, and sample uniformly from those samples at training time.

\paragraph{Baselines}
We consider as base language model (\textsc{Base LM}) used to generate negatives for the residual EBM, a transformer language
model with 12 layers, $h=16$, $d_{model}=1024$, $d_{ff}=4096$ (we refer to \citet{vaswani2017attention} for notations). 
This is also our first baseline model.

The {\em joint} model has as many parameters as the sum of the number of parameters in the base LM 
and the number of parameters in the energy network. 
To make a fair comparison, we consider two additional baselines that have the same number of parameters
as our joint model. 

The first baseline is a Residual Auto-regressive Language Model baseline (\textsc{RALM}):
\begin{equation}
    \label{eq:baseline} \log P_{RALM} (x_t | x_{<t}) = \log P_{LM} (x_t | x_{<t}) + \log P_{\phi} (x_t | x_{<t}) + const
\end{equation}
 where $P_{\phi}$ takes the form of another auto-regressive language model. The parameters of $P_{\phi}$ are trained by
exact maximum likelihood training of $P_{RALM}$.

The second baseline is an auto-regressive language model of the same size of our joint model (sum of the base LM 
and energy function parameters), we dub this model Big 
Auto-regressive Language Model (\textsc{BALM}). \textsc{BALM} has 12 layers, $h=16$, $d_{model}=1568$, $d_{ff}=6272$, and is trained by standard token level cross-entropy loss.

\paragraph{Residual EBM Architecture}
We consider two architectures for our residual EBM, both of them are based on 
transformers~\citep{vaswani2017attention,bert}. The first version uses causal self-attention 
and is derived from the base LM, a unidirectional transformer ($\textsc{UniT}$). It is of the same architecture as $\textsc{Base LM}$, except that in the final layer we project the mean-pooled hidden states to a scalar energy value. We initialize its parameters with a language model trained on the same dataset.

The second version is instead bi-directional ($\textsc{BiT}$), and the energy function is computed by projecting the mean-pooled top hidden states
down to a single scalar value. We consider three variants, a $\textsc{BiT-Base}$ following the architecture of RoBERTa-Base, and a $\textsc{BiT-Large}*$ following RoBERTa-Large~\citep{liu2019roberta}, and a $\textsc{BiT-Med}$ with the same number of parameters as $\textsc{UniT}$ (such that $\textsc{Joint BiT-Med}$ has roughly the same number of parameters as $\textsc{BALM}$)\footnote{We use models from the HuggingFace repository at \url{https://github.com/huggingface/transformers}}. We initialize the parameters with a trained BERT, and we use $*$ to mark usage of external data, otherwise it means that BERT was trained on our training set.
Notice how our model can be interpreted as a natural way to finetune large bidirectional pretrained models for the 
text generation task.

While we expect $\textsc{BiT}$ to yield better results because it can fully leverage context also for intermediate tokens,
we also consider $\textsc{UniT}$ to compare to the RALM baseline, which uses the same architecture and only differs in the way
parameters are trained and in the presence of local normalization. 

We train our models on 8 DGX nodes, each with 8 Nvidia V100s. To improve training speed, we use mixed precision training\footnote{\url{https://github.com/NVIDIA/apex}}. We use the Adam optimizer, with cosine learning rate decay and learning rate warmup. To stabilize training we used gradient norm clipping~\citep{pascanu2013difficulty}. Detailed hyper-parameter settings can be found in Appendix~\ref{appendix:param}.

For generation, we use top-k sampling with $k=10$ for all human evaluations. We take 10,000 samples from $\textsc{Base LM}$ for our joint sampling.

\subsection{Results} \label{sec:results}

\paragraph{Automatic Evaluation} 
\begin{table}[!t]
    \centering
    \small
    \begin{tabular}{lcccc}
    \toprule
    \multirow{2}{*}{Model (\#parameters)}   & \multicolumn{2}{c}{CC-News}   & \multicolumn{2}{c}{Toronto Book Corpus}\\
            & Val   & Test                 & Val   & Test  \\
            \midrule
    \textsc{base LM} (203M)     & 18.41 & 17.57                & 16.16 & 18.29  \\
    \textsc{RALM} (LM+203M) & 17.01 & 16.17  & 15.71 & 17.85  \\
    \textsc{BALM} (408M) & $16.50$ & $15.74$ & $15.00$ & $16.99$   \\
    \textsc{joint UniT} (LM+203M) & 16.42-16.44 & 15.57-15.58  & 15.12-15.13 & 16.98-17.00 \\
    \textsc{joint BiT-Base} (LM+125M) & $15.32$-$15.35$ & $14.61$-$14.64$ & -  & -  \\
    \textsc{joint BiT-Base*} (LM+125M) & $15.11$-$15.17$ & $14.37$-$14.42$   & $14.14$-$14.16$ & $15.72$-$15.74$ \\
    \textsc{joint BiT-Large*} (LM+355M) & $\mathbf{14.59}$-$\mathbf{14.61}$ & $\mathbf{13.97}$-$\mathbf{14.00}$  & $\mathbf{13.80}$-$\mathbf{13.83}$  & $\mathbf{15.33}$-$\mathbf{15.36}$ \\
    \midrule
    \textsc{Base LM-24L} (203M) & $15.71$ & $14.89$    &  $15.61$& $18.14$ \\
    \textsc{RALM-24L} (LM-24L+203M) & $15.70$ & $14.89$  & $15.63$ & $18.17$ \\
    \textsc{BALM-24L} (408M) & $14.58$ & $13.92$  & $15.20$ & $18.24$ \\ %
    \textsc{joint UniT} (LM-24L+203M) & $14.59$-$14.61$ & $13.81$-$13.82$ & $15.12-15.16$ & $17.46$-$17.48$ \\
    \textsc{joint BiT-Base} (LM-24L+125M) & ${13.68}$-${13.69}$ &   ${13.01}$-${13.03}$ & - & -\\
    \textsc{joint BiT-Base*} (LM-24L+125M) & $13.60$-$13.62$ &   $12.93$-$12.95$ & $14.11$-$14.12$ & $16.17$-$16.18$ \\
    \textsc{joint BiT-Med} (LM-24L+203M) & $12.97$-$13.01$   &  $12.38$-$12.42$    & - & -\\
    \textsc{joint BiT-Large*} (LM-24L+355M) & $\mathbf{12.71}$-$\mathbf{12.77}$  & $\mathbf{12.10}$-$\mathbf{12.16}$    & $\mathbf{13.30}$-$\mathbf{13.34}$ & $\mathbf{15.17}$-$\mathbf{15.22}$ \\ 
    \bottomrule\\
    \end{tabular}
    \caption{Validation and test perplexity on CC-News and Toronto Book Corpus. * denotes models initialized with RoBERTa 
trained on additional data. The joint model perplexity ranges are \textbf{estimated} using 100,000 samples, see 
Eq.~\ref{eq:bounds}. The number of  parameters of each model is shown in parentheses.}
    \label{tab:main}
\end{table}

\begin{figure}[!htp]
  \centering
  \includegraphics[trim={3cm 0 0 0.7cm},clip,width=1.0\linewidth]{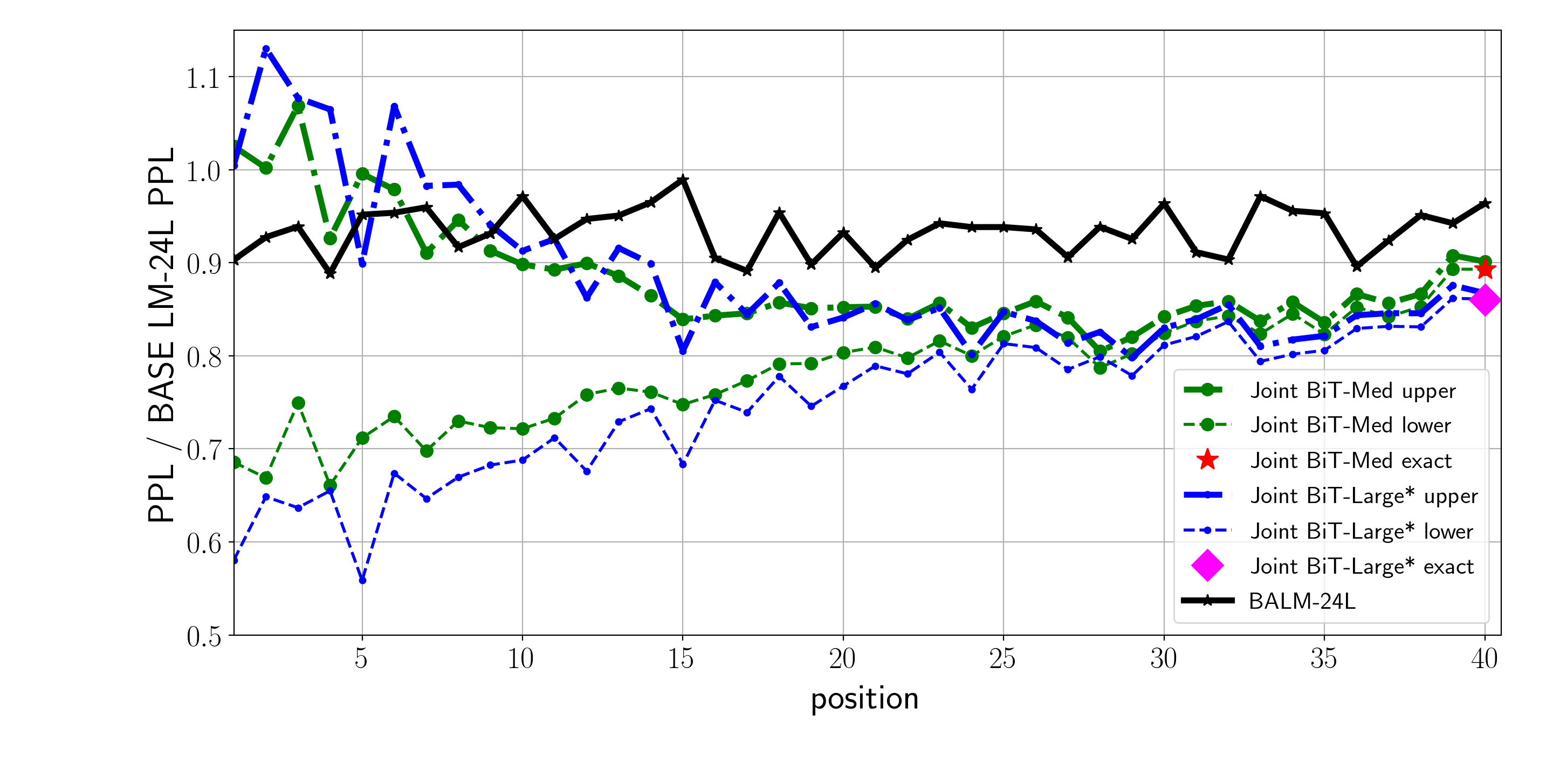}
  
\vspace*{-0.5cm}

\caption{\label{fig:main}Perplexity gain of $\textsc{Joint BiT-Med}$ and $\textsc{Joint BiT-Large}*$ (using $\textsc{Base LM-24L}$) at each position relative to $\textsc{Base LM-24L}$ 
on the test set of CC-News. At each position the lower and upper bounds (Eq.~\ref{eq:stepppl} estimated using the method in Eq.~\ref{eq:bounds}, see \textsection\ref{sec:eval} for more details) are estimated 
using 20,000 samples. The shorter the horizon (moving to the right), the tighter the estimation is but also the more 
limited the gains compared to base LM as un-normalized models are most useful on longer generations.}

\end{figure}

\begin{table}[!t]
    \centering
    \begin{tabular}{lclcr}
    \toprule
    Model1 (baseline) &    & Model2 (compared model) & Rate & p-value\\
    \midrule
    \textsc{base LM}& \multirow{9}{*}{$<$}  & \textsc{joint uniT} & 52.85\% &0.16 \\
    \textsc{base LM}&   & \textsc{joint BiT-Base} & 56.25\% &0.015 \\
    \textsc{base LM}&   & \textsc{joint BiT-Large*} & 58.93\% &0.00084 \\
    \textsc{base LM}&   & \textsc{BALM} & 46.77\% &0.88 \\
    \textsc{BALM}&   & \textsc{joint UniT} & 50.00\%& 0.52\\
    \textsc{BALM}&   & \textsc{joint BiT-Base} & 57.89\%& 0.0027\\
    \textsc{BALM}&   & \textsc{joint BiT-Large*} & 59.89\%& 0.00020\\
    \textsc{BALM-24L}&   & \textsc{joint BiT-Med} (24L) & 56.23\% & 0.015 \\
    \textsc{joint BiT-Large*} (24L)&   & \textsc{Human} & 55.21\% & 0.036\\
    \midrule
    \textsc{base LM}& $\le$  & \textsc{BALM} & 54.85\% & 0.050\\
    \bottomrule\\
    \end{tabular}
    \caption{Human evaluation results on a subset of 333 sentences on the CC-News test set. The rate is computed as the percentage of sentences where the number of turkers preferring Model1 is strictly less than (denoted with $<$) or not greater than (denoted with $\le$) those preferring Model2. Attention check is used to drop some votes, so there might exist ties. p-value is based on single-sided binomial test. 
    }
    \label{tab:human}
\end{table}
Our main result is reported in Table~\ref{tab:main} where we compare models in terms of their perplexity. 
We can see that on both datasets, residual EBMs with causal attention \textsc{joint UniT} outperforms
the baseline \textsc{RALM} with approximately the same number of parameters. The non-residual baseline $\textsc{BALM}$ performs similarly to \textsc{joint UniT}, which might be due to the limitation that $P_{LM}$ is not trained jointly with the residual model in both \textsc{joint UniT} and $\textsc{RALM}$. However, by using our EBM approach, we can remove the causal attention mask and use bi-directional models, which achieves better performance than baselines and \textsc{joint UniT}: 
without external data, \textsc{joint BiT-Base} reaches a higher performance than \textsc{joint UniT} 
with fewer parameters. By initializing from the state-of-the-art pretrained bi-directional transformers 
RoBERTa-Base and RoBERTa-Large, \textsc{joint BiT-Base*} and \textsc{Joint BiT-Large*} reach even 
better performance than \textsc{joint BiT-Base}. 

In the lower part of the table, we show that if we make the big language model baseline \textsc{BALM} 
deeper (\textsc{BALM-24L}) (24 layers instead of 12, for the same number of parameters) we attain lower perplexity.
However, training the joint model \textsc{Joint BiT-Base} on the residual of a deeper language model \textsc{BASE LM-24L} 
yields even lower perplexity, despite having fewer parameters. By using the same number of parameters as \textsc{BALM-24L}, \textsc{Joint Bit-Med} further decreases perplexity. Finally, by initializing from RoBERTa-Large, \textsc{joint BiT-Base*} obtains the best results.

One caveat of our evaluation protocol is that the perplexity bounds are only estimates, 
which might not reflect the true value, particularly since the number of possible sequences grows exponentially 
with the number of words that are generated. 
We therefore break down perplexity per position in the generated sequences as in Eq.~\ref{eq:stepppl}, 
and compare the estimated PPLs to the true enumerated 
PPLs at the last position, as shown in Figure~\ref{fig:main}. We find that at the final generation step, the estimated 
bounds agree remarkably well with the exact values, proving that our method at least gets a reasonable PPL estimate at the last generation step, and that $\textsc{Joint BiT-Med}$ outperforms baselines at the last generation step for sure.

\paragraph{Human Evaluation} Better perplexity results do not necessarily imply better generations. 
Besides, since generation from the residual EBM 
requires approximations as in Algorithm~\ref{algo:main}, the limited sample size might induce approximation errors 
compared to truly sampling from the joint distribution. Therefore, we conducted human evaluations to compare generations 
from the residual EBM model to generations from the baseline language models. 

For each prefix, we present one completion from each model, and ask humans to select the one that is a better continuation. 
More details about human evaluation can be found in the Appendix~\ref{app:human}. 
The preference rates reported in Table~\ref{tab:human} confirm that indeed the generation quality of
 $\textsc{Joint Bit-Base}$ and $\textsc{Joint Bit-Large}*$ is better than both language model baselines. 
Depending on the model variant, our joint model (with bidirectional EBM) is preferred between 56\% and almost 60\% of the times;
interestingly, the preference rate does not change much as we compare against base LM as opposed to BALM.
In fact, humans do not seem to have a strong preference for BALM over base LM, despite the former scores two perplexity points lower.
Similarly, $\textsc{Joint Unit}$ is not strongly preferred over $\textsc{Base LM}$ despite its lower perplexity score.
We surmise that unidirectional scoring functions and auto-regressive models exhibit generation artifacts which are easily 
detected by humans, and these may overshadow the improvements brought by perplexity gains.  

\subsection{Analyses} \label{sec:analysis}
\begin{figure}[t]
\begin{subfigure}{.5\textwidth}
  \centering
  \includegraphics[width=0.9\linewidth]{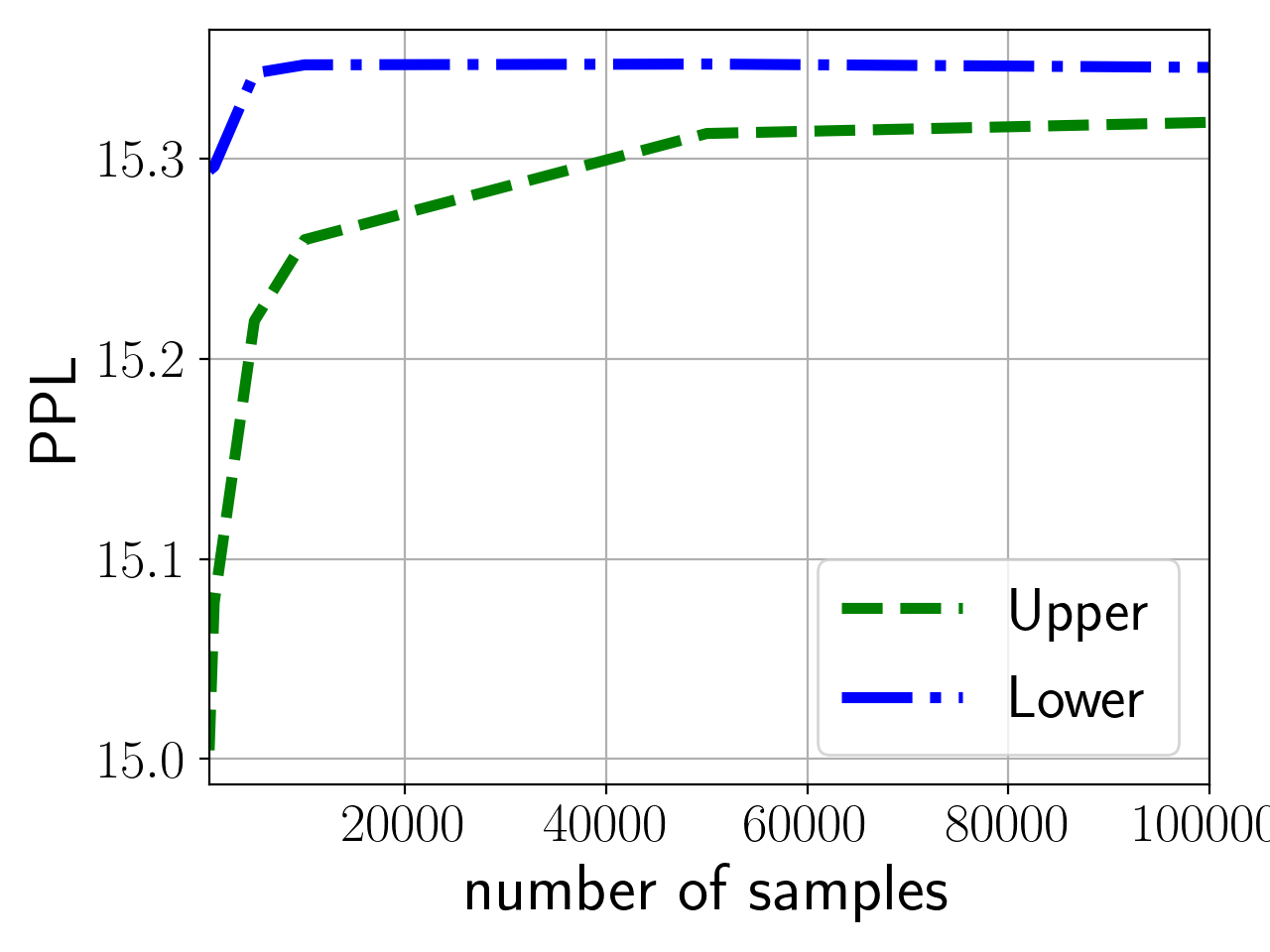}

\end{subfigure}%
\begin{subfigure}{.5\textwidth}
\includegraphics[width=0.9\linewidth]{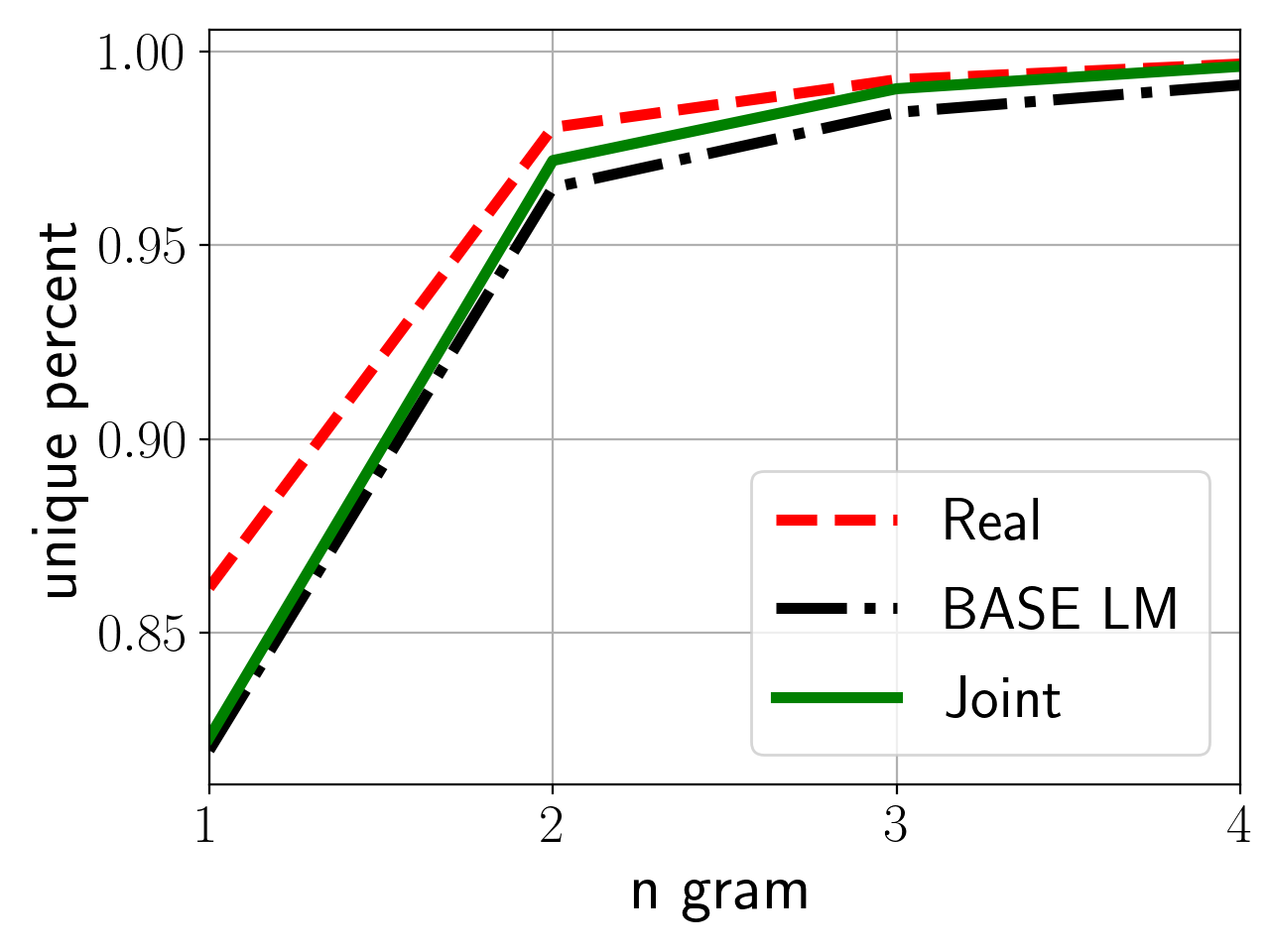}
\end{subfigure}
\caption{Left: PPL estimation for joint \textsc{BiT-Base} on CC-News validation set as we vary
the number of samples. Right: Percentage of Unique n-grams found in real data,
samples from the joint model $\textsc{BiT-Base}$ and samples from the base language model.
The joint sampling is done with 10,000 samples.}
\label{fig:repet}
\end{figure}

In this section, we analyze some of the results we obtained. First, we check whether we used a sufficient number of
samples in our perplexity estimates. Second, we assess whether the joint model produces fewer repetitions compared 
to the base language model, and finally we check how well some statistics of the model and data distributions match.

\paragraph{Number of samples.} In Figure~\ref{fig:repet}, we vary the number of samples we take in order to 
estimate PPL upper and lower bounds. Beyond 20,000 samples the upper estimate becomes very stable, 
although we have to emphasize that these estimates might be biased even though the gap between lower and upper bound 
closes as we take more samples.  


\paragraph{Repetitions.} A typical artifact of auto-regressive language models is their tendency to repeat phrases.
It is then interesting to check whether the joint model is able to alleviate this artifact. Fig.~\ref{fig:repet}
shows that indeed the joint model has a slightly higher percentage of unique n-grams compared to the baseline language model with $n=2,3,4$,
although still not as high as the original human generated text.

\paragraph{A necessary condition for the model to match the data distribution.} 
If the joint model $p_{\theta}$ matches the data distribution $p_d$, then statistics computed on a large population of samples
from the two distributions should also match. In particular, Fig.~\ref{fig:histog} show the density plots of log-likelihood scores
of the baseline language model (left) and joint model (right) when fed with their own samples versus samples from the test set. 
We observe that the histogram of samples from the joint model matches the real data distribution more closely: The difference of means in the $\textsc{LM Base}$ case is 21.64 whereas the difference is 6.20 in the joint approach.

\begin{figure}[t]
\begin{subfigure}{.5\textwidth}
  \centering
  \includegraphics[width=0.9\linewidth]{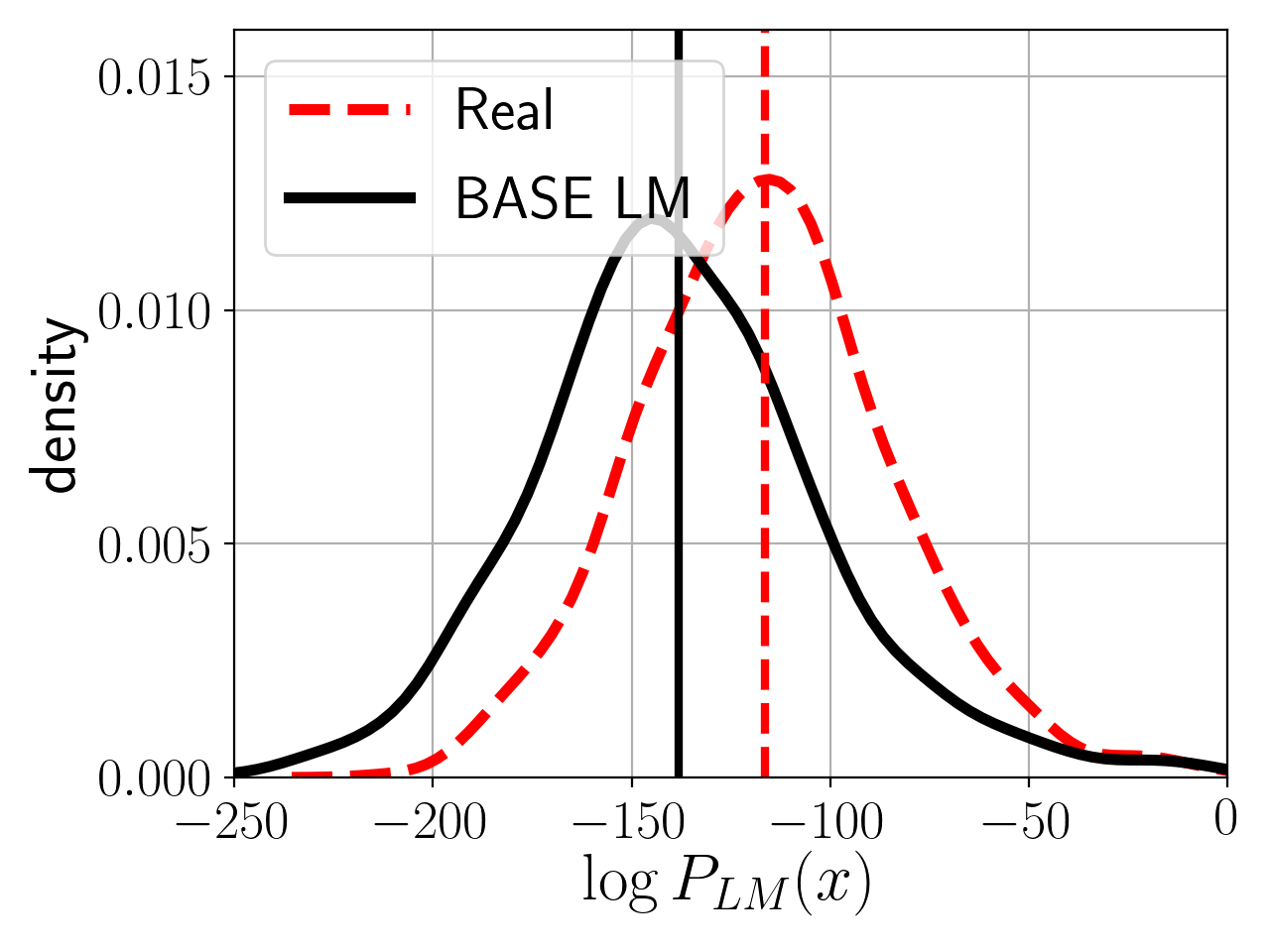}

\end{subfigure}%
\begin{subfigure}{.5\textwidth}
\includegraphics[width=0.9\linewidth]{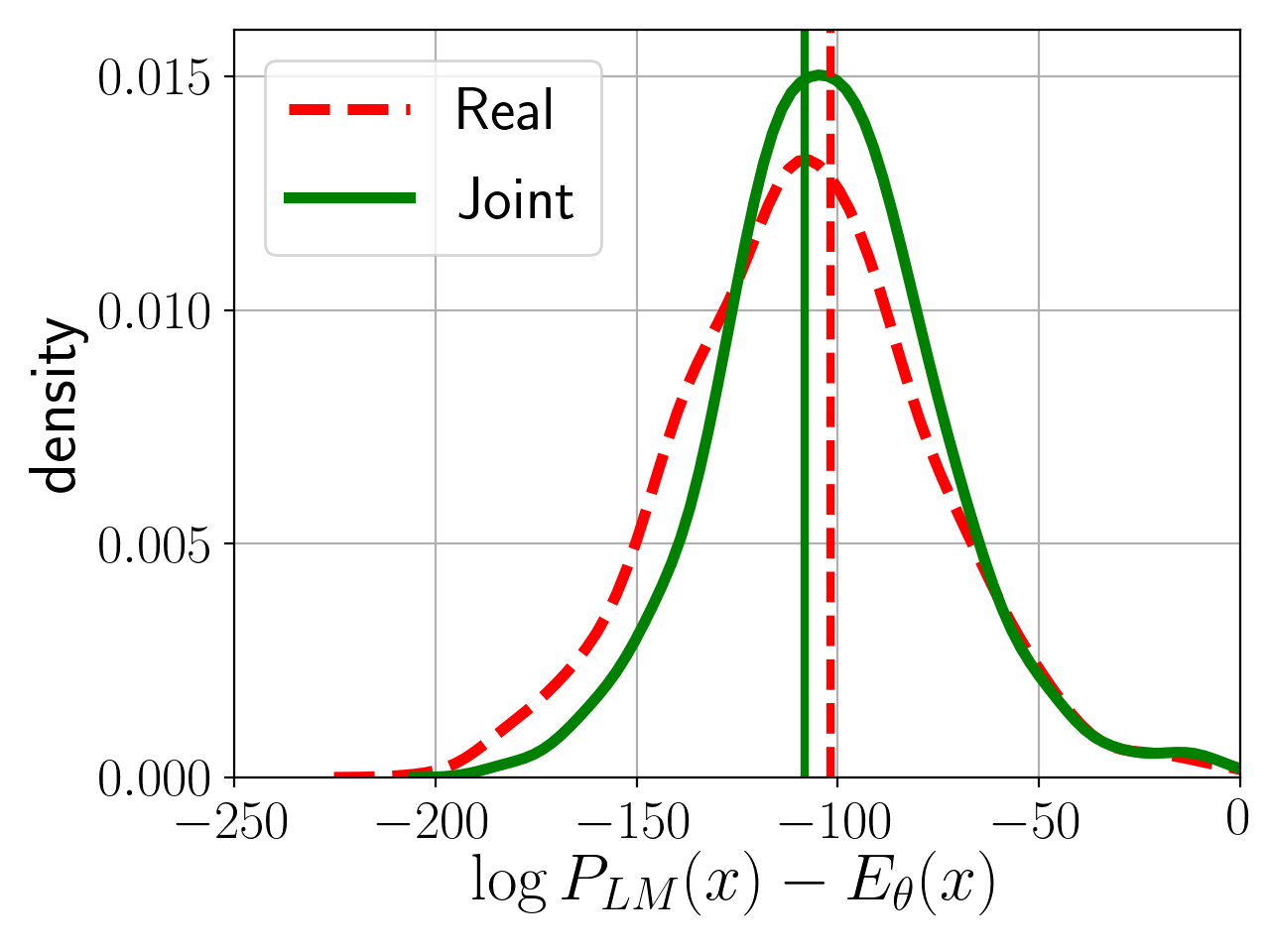}
\end{subfigure}
\caption{\label{fig:histog}Density plot of log-probability scores using the base
language model (left) or the joint model  (right). The red curve corresponds to real samples, the black curve to samples from \textsc{Base LM} and the green curve to samples from \textsc{BiT-Base}. The joint model provides a much better fit than the base language model.}
\end{figure}

\section{Limitations} \label{sec:limitations}
In the previous sections we highlighted the strengths of residual EBMs, namely their simplicity, efficiency both at training
and test time, and their improved perplexity scores against strong auto-regressive language model baselines. In this section, we comment on their limitations to caution the reader about when these methods are more likely to succeed
and to inform other  researchers about  what future avenues of  research may naturally derive from this work.

In order to make training efficient and side step costly negative mining using the energy function itself, the current approach
uses negatives generated from a pretrained auto-regressive language model. Therefore, our model works as long as the base
language model from which we draw samples is strong enough, and as long as the ground truth and other plausible sequences are
{\em reachable} by the baseline language model. 

If the base language model has poor quality, then generation from our
joint model is going to be poor as well, as the joint model merely resamples generations from the original language model. 
Moreover, training is going to be trivial if the base language model is poor, because the residual energy function 
merely needs to detect trivial generation artifacts from the base  language model. In fact, observe that the role of 
positive and negative samples is symmetric in the loss of Eq.~\ref{eq:nce}. This means that the energy function can choose
to minimize the loss by either modeling the true data or the negative samples; since the latter have much simpler structure,
it is going to model the negative samples. Therefore, importance sampling amounts to mostly down-weighing the worst samples
from the base language model.
The consequence of this is that search with a poor base language model is going to be catastrophically inefficient, as
we would need to sample an impractically large number of negatives in order to find samples that are reasonably close
to the true data manifold.

To summarize, this work makes a rather strong implicit assumption on the quality of the base language model, and it 
is expected to work well only when this is rather strong. In our application, this assumption is met quite well in practice 
as large auto-regressive language models trained on large datasets have improved significantly in recent years~\citep{radford2019language}.
In general however, residual learning always carries liability to its
base model. 

\section{Conclusions and Future work}
We investigated an EBM trained on the residual of a pretrained autoregressive language model \citep{wang2018learning,parshakova2019global}.
The resulting joint model scores sequences holistically, thanks to the energy function. Training is very efficient
and consists of a binary classification task between positives from the training set and pregenerated negatives from
the fixed language model. Generation is also very efficient as it amounts to resampling from the large set of negatives
produced by the base language model. Our estimates show that the resulting model has lower perplexity than the base
language model. 
Finally, this approach may be interpreted
as a natural way to finetune a large bidirectional transformer like BERT for text generation applications.

In the future, we plan to investigate other ways to generate negatives that may strike a better trade-off
between the amount of compute each negative requires and their closeness to the joint model distribution.
It would also be interesting to explore other loss functions and the generation of longer pieces of text by using this
model auto-regressively at the chunk level, as opposed to the token level.


\bibliography{iclr2020_conference}

\begin{thebibliography}{3}
\providecommand{\natexlab}[1]{#1}
\providecommand{\url}[1]{\texttt{#1}}
\expandafter\ifx\csname urlstyle\endcsname\relax
  \providecommand{\doi}[1]{doi: #1}\else
  \providecommand{\doi}{doi: \begingroup \urlstyle{rm}\Url}\fi

\bibitem[Bengio \& LeCun(2007)Bengio and LeCun]{Bengio+chapter2007}
Yoshua Bengio and Yann LeCun.
\newblock Scaling learning algorithms towards {AI}.
\newblock In \emph{Large Scale Kernel Machines}. MIT Press, 2007.

\bibitem[Goodfellow et~al.(2016)Goodfellow, Bengio, Courville, and
  Bengio]{goodfellow2016deep}
Ian Goodfellow, Yoshua Bengio, Aaron Courville, and Yoshua Bengio.
\newblock \emph{Deep learning}, volume~1.
\newblock MIT Press, 2016.

\bibitem[Hinton et~al.(2006)Hinton, Osindero, and Teh]{Hinton06}
Geoffrey~E. Hinton, Simon Osindero, and Yee~Whye Teh.
\newblock A fast learning algorithm for deep belief nets.
\newblock \emph{Neural Computation}, 18:\penalty0 1527--1554, 2006.

\end{thebibliography}


\begin{thebibliography}{46}
\providecommand{\natexlab}[1]{#1}
\providecommand{\url}[1]{\texttt{#1}}
\expandafter\ifx\csname urlstyle\endcsname\relax
  \providecommand{\doi}[1]{doi: #1}\else
  \providecommand{\doi}{doi: \begingroup \urlstyle{rm}\Url}\fi

\bibitem[Azadi et~al.(2018)Azadi, Olsson, Darrell, Goodfellow, and
  Odena]{azadi2018discriminator}
Samaneh Azadi, Catherine Olsson, Trevor Darrell, Ian Goodfellow, and Augustus
  Odena.
\newblock Discriminator rejection sampling.
\newblock \emph{arXiv preprint arXiv:1810.06758}, 2018.

\bibitem[Bakhtin et~al.(2019)Bakhtin, Gross, Ott, Deng, Ranzato, and
  Szlam]{bakhtin2019real}
Anton Bakhtin, Sam Gross, Myle Ott, Yuntian Deng, Marc'Aurelio Ranzato, and
  Arthur Szlam.
\newblock Real or fake? learning to discriminate machine from human generated
  text.
\newblock \emph{arXiv preprint arXiv:1906.03351}, 2019.

\bibitem[Bottou(1991)]{labelbias}
Léon Bottou.
\newblock Une approche théorique de l'apprentissage connexionniste:
  Applications à la reconnaissance de la parole, 1991.
\newblock Ph.D. thesis, Université de Paris XI, Orsay, France.

\bibitem[Bowman et~al.(2016)Bowman, Vilnis, Vinyals, Dai, Jozefowicz, and
  Bengio]{textvae}
Samuel~R. Bowman, Luke Vilnis, Oriol Vinyals, Andrew~M. Dai, Rafal Jozefowicz,
  and Samy Bengio.
\newblock Generating sentences from a continuous space.
\newblock In \emph{SIGNLL Conference on Computational Natural Language
  Learning}, 2016.

\bibitem[Devlin et~al.(2018)Devlin, Chang, Lee, and Toutanova]{bert}
Jacob Devlin, Ming{-}Wei Chang, Kenton Lee, and Kristina Toutanova.
\newblock {BERT:} pre-training of deep bidirectional transformers for language
  understanding.
\newblock \emph{CoRR}, abs/1810.04805, 2018.
\newblock URL \url{http://arxiv.org/abs/1810.04805}.

\bibitem[Du \& Mordatch(2019)Du and Mordatch]{mordatch19}
Yilun Du and Igor Mordatch.
\newblock Implicit generation and generalization in energy-based models.
\newblock \emph{CoRR}, abs/1903.08689, 2019.
\newblock URL \url{http://arxiv.org/abs/1903.08689}.

\bibitem[Edunov et~al.(2018)Edunov, Ott, Auli, Grangier, and Ranzato]{seqtr18}
Sergey Edunov, Myle Ott, Michael Auli, David Grangier, and Marc'Aurelio
  Ranzato.
\newblock Classical structured prediction losses for sequence to sequence
  learning.
\newblock In \emph{North American Chapter of the Association for Computational
  Linguistics}, 2018.

\bibitem[Gao et~al.(2018)Gao, Lu, Zhou, Zhu, and Nian~Wu]{gao2018learning}
Ruiqi Gao, Yang Lu, Junpei Zhou, Song-Chun Zhu, and Ying Nian~Wu.
\newblock Learning generative convnets via multi-grid modeling and sampling.
\newblock In \emph{Proceedings of the IEEE Conference on Computer Vision and
  Pattern Recognition}, pp.\  9155--9164, 2018.

\bibitem[Goodfellow et~al.(2014)Goodfellow, Pouget-Abadie, Mirza, Xu,
  Warde-Farley, Ozair, Courville, and Bengio]{gan}
Ian~J. Goodfellow, Jean Pouget-Abadie, Mehdi Mirza, Bing Xu, David
  Warde-Farley, Sherjil Ozair, Aaron Courville, and Yoshua Bengio.
\newblock Generative adversarial nets.
\newblock In \emph{NIPS}, 2014.

\bibitem[Grover et~al.(2019)Grover, Song, Agarwal, Tran, Kapoor, Horvitz, and
  Ermon]{grover2019bias}
Aditya Grover, Jiaming Song, Alekh Agarwal, Kenneth Tran, Ashish Kapoor, Eric
  Horvitz, and Stefano Ermon.
\newblock Bias correction of learned generative models using likelihood-free
  importance weighting.
\newblock \emph{arXiv preprint arXiv:1906.09531}, 2019.

\bibitem[Gutmann \& Hyv{\"a}rinen(2010)Gutmann and
  Hyv{\"a}rinen]{gutmann2010noise}
Michael Gutmann and Aapo Hyv{\"a}rinen.
\newblock Noise-contrastive estimation: A new estimation principle for
  unnormalized statistical models.
\newblock In \emph{Proceedings of the Thirteenth International Conference on
  Artificial Intelligence and Statistics}, pp.\  297--304, 2010.

\bibitem[Hinton(2002)]{cd}
Geoffrey~E. Hinton.
\newblock Training products of experts by minimizing contrastive divergence.
\newblock \emph{Neural Computation}, 14:\penalty0 1771--1800, 2002.

\bibitem[Hinton(2012)]{hinton2012practical}
Geoffrey~E Hinton.
\newblock A practical guide to training restricted boltzmann machines.
\newblock In \emph{Neural networks: Tricks of the trade}, pp.\  599--619.
  Springer, 2012.

\bibitem[Holtzman et~al.(2019)Holtzman, Buys, Forbes, and
  Choi]{holtzman2019curious}
Ari Holtzman, Jan Buys, Maxwell Forbes, and Yejin Choi.
\newblock The curious case of neural text degeneration.
\newblock \emph{arXiv preprint arXiv:1904.09751}, 2019.

\bibitem[Hopfield(1982)]{hopfield}
John Hopfield.
\newblock Neural networks and physical systems with emergent collective
  computational abilities.
\newblock In \emph{National Academy of Sciences of the USA}, volume~79, pp.\
  2554--–2558, 1982.

\bibitem[Horvitz \& Thompson(1952)Horvitz and Thompson]{impsamp}
Daniel~G. Horvitz and Donovan~J. Thompson.
\newblock A generalization of sampling without replacement from a finite
  universe.
\newblock \emph{Journal of the American Statistical Association}, 1952.

\bibitem[Kiros et~al.(2015)Kiros, Zhu, Salakhutdinov, Zemel, Torralba, and
  Urtasun]{kiros2015skip}
Ryan Kiros, Yukun Zhu, Ruslan Salakhutdinov, Richard~S Zemel, Antonio Torralba,
  and Sanja Urtasun, Raquel an d~Fidler.
\newblock Skip-thought vectors.
\newblock \emph{arXiv preprint arXiv:1506.06726}, 2015.

\bibitem[LeCun et~al.(2006)LeCun, Chopra, Hadsell, Ranzato, and Huang]{ebm}
Yann LeCun, Sumit Chopra, Raia Hadsell, Marc'Aurelio Ranzato, and Fu-Jie Huang.
\newblock A tutorial on energy-based learning.
\newblock \emph{Predicting Structured Outputs}, 2006.
\newblock MIT Press.

\bibitem[Liu et~al.(2019)Liu, Ott, Goyal, Du, Joshi, Chen, Levy, Lewis,
  Zettlemoyer, and Stoyanov]{liu2019roberta}
Yinhan Liu, Myle Ott, Naman Goyal, Jingfei Du, Mandar Joshi, Danqi Chen, Omer
  Levy, Mike Lewis, Luke Zettlemoyer, and Veselin Stoyanov.
\newblock Roberta: A robustly optimized bert pretraining approach.
\newblock \emph{arXiv preprint arXiv:1907.11692}, 2019.

\bibitem[Ma \& Collins(2018)Ma and Collins]{ma18}
Zhuang Ma and Michael Collins.
\newblock Noise contrastive estimation and negative sampling for conditional
  models: Consistency and statistical efficiency.
\newblock In \emph{Empirical Methods for Natural Language Processing}, 2018.

\bibitem[Nagel(2016)]{ccnews}
Sebastian Nagel.
\newblock Cc-news.
\newblock
  \url{http://web.archive.org/save/http://commoncrawl.org/2016/10/news-dataset-available/},
  2016.

\bibitem[Nijkamp et~al.(2019)Nijkamp, Hill, Zhu, and Wu]{nijkamp2019learning}
Erik Nijkamp, Mitch Hill, Song-Chun Zhu, and Ying~Nian Wu.
\newblock Learning non-convergent non-persistent short-run mcmc toward
  energy-based model.
\newblock In \emph{Advances in Neural Information Processing Systems}, pp.\
  5233--5243, 2019.

\bibitem[Nowozin(2018)]{nowozin2018debiasing}
Sebastian Nowozin.
\newblock Debiasing evidence approximations: On importance-weighted
  autoencoders and jackknife variational inference.
\newblock In \emph{International Conference on Learning Representations}, 2018.

\bibitem[Owen(2013)]{mcbook}
Art~B. Owen.
\newblock \emph{Monte Carlo theory, methods and examples}.
\newblock 2013.
\newblock URL \url{https://statweb.stanford.edu/~owen/mc/}.
\newblock chapter 9.

\bibitem[Parshakova et~al.(2019)Parshakova, Andreoli, and
  Dymetman]{parshakova2019global}
Tetiana Parshakova, Jean-Marc Andreoli, and Marc Dymetman.
\newblock Global autoregressive models for data-efficient sequence learning.
\newblock In \emph{Conference on Computational Natural Language Learning},
  2019.

\bibitem[Pascanu et~al.(2013)Pascanu, Mikolov, and
  Bengio]{pascanu2013difficulty}
Razvan Pascanu, Tomas Mikolov, and Yoshua Bengio.
\newblock On the difficulty of training recurrent neural networks.
\newblock In \emph{International conference on machine learning}, pp.\
  1310--1318, 2013.

\bibitem[Radford et~al.(2019)Radford, Wu, Child, Luan, Amodei, and
  Sutskever]{radford2019language}
Alec Radford, Jeffrey Wu, Rewon Child, David Luan, Dario Amodei, and Ilya
  Sutskever.
\newblock Language models are unsupervised multitask learners.
\newblock \emph{OpenAI Blog}, 1\penalty0 (8), 2019.

\bibitem[Ranzato et~al.(2007)Ranzato, Boureau, Chopra, and LeCun]{ebm_unsup}
Marc'Aurelio Ranzato, Y-Lan Boureau, Sumit Chopra, and Yann LeCun.
\newblock A unified energy-based framework for unsupervised learning.
\newblock In \emph{11-th International Workshop on Artificial Intelligence and
  Statistics (AISTATS)}, 2007.

\bibitem[Ranzato et~al.(2016)Ranzato, Chopra, Auli, and Zaremba]{mixer}
Marc'Aurelio Ranzato, Sumit Chopra, Michael Auli, and Wojciech Zaremba.
\newblock Sequence level training with recurrent neural networks.
\newblock In \emph{International Conference on Learning Representation}, 2016.

\bibitem[Rosenfeld et~al.(2001)Rosenfeld, Chen, and Zhu]{rosenfeld2001whole}
Ronald Rosenfeld, Stanley~F Chen, and Xiaojin Zhu.
\newblock Whole-sentence exponential language models: a vehicle for
  linguistic-statistical integration.
\newblock \emph{Computer Speech \& Language}, 15\penalty0 (1):\penalty0 55--73,
  2001.

\bibitem[Sennrich et~al.(2015)Sennrich, Haddow, and Birch]{sennrich2015neural}
Rico Sennrich, Barry Haddow, and Alexandra Birch.
\newblock Neural machine translation of rare words with subword units.
\newblock \emph{arXiv preprint arXiv:1508.07909}, 2015.

\bibitem[Shen et~al.(2004)Shen, Sarkar, and Och]{shen2004discriminative}
Libin Shen, Anoop Sarkar, and Franz~Josef Och.
\newblock Discriminative reranking for machine translation.
\newblock In \emph{Human Language Technology Conference of the North American
  Chapter of the Association for Computational Linguistics: HLT-NAACL 2004},
  pp.\  177--184, 2004.

\bibitem[Teh et~al.(2003)Teh, Welling, Osindero, and E.]{teh03}
Y.~W. Teh, M.~Welling, S.~Osindero, and Hinton~G. E.
\newblock Energy-based models for sparse overcomplete representations.
\newblock \emph{Journal of Machine Learning Research}, 4:\penalty0 1235--1260,
  2003.

\bibitem[Vaswani et~al.(2017)Vaswani, Shazeer, Parmar, Uszkoreit, Jones, Gomez,
  Kaiser, and Polosukhin]{vaswani2017attention}
Ashish Vaswani, Noam Shazeer, Niki Parmar, Jakob Uszkoreit, Llion Jones,
  Aidan~N Gomez, {\L}ukasz Kaiser, and Illia Polosukhin.
\newblock Attention is all you need.
\newblock In \emph{Advances in Neural Information Processing Systems}, pp.\
  5998--6008, 2017.

\bibitem[Wang \& Ou(2017)Wang and Ou]{wang2017language}
Bin Wang and Zhijian Ou.
\newblock Language modeling with neural trans-dimensional random fields.
\newblock In \emph{2017 IEEE Automatic Speech Recognition and Understanding
  Workshop (ASRU)}, pp.\  294--300. IEEE, 2017.

\bibitem[Wang \& Ou(2018{\natexlab{a}})Wang and Ou]{wang2018improved}
Bin Wang and Zhijian Ou.
\newblock Improved training of neural trans-dimensional random field language
  models with dynamic noise-contrastive estimation.
\newblock In \emph{2018 IEEE Spoken Language Technology Workshop (SLT)}, pp.\
  70--76. IEEE, 2018{\natexlab{a}}.

\bibitem[Wang \& Ou(2018{\natexlab{b}})Wang and Ou]{wang2018learning}
Bin Wang and Zhijian Ou.
\newblock Learning neural trans-dimensional random field language models with
  noise-contrastive estimation.
\newblock In \emph{2018 IEEE International Conference on Acoustics, Speech and
  Signal Processing (ICASSP)}, pp.\  6134--6138. IEEE, 2018{\natexlab{b}}.

\bibitem[Wang et~al.(2015)Wang, Ou, and Tan]{wang2015trans}
Bin Wang, Zhijian Ou, and Zhiqiang Tan.
\newblock Trans-dimensional random fields for language modeling.
\newblock In \emph{Proceedings of the 53rd Annual Meeting of the Association
  for Computational Linguistics and the 7th International Joint Conference on
  Natural Language Processing (Volume 1: Long Papers)}, pp.\  785--794, 2015.

\bibitem[Wang et~al.(2017)Wang, Ou, and Tan]{wang2017learning}
Bin Wang, Zhijian Ou, and Zhiqiang Tan.
\newblock Learning trans-dimensional random fields with applications to
  language modeling.
\newblock \emph{IEEE transactions on pattern analysis and machine
  intelligence}, 40\penalty0 (4):\penalty0 876--890, 2017.

\bibitem[Welling et~al.(2005)Welling, Rosen-Zvi, and Hinton]{rbm}
Max Welling, Michal Rosen-Zvi, and Geoffrey~E. Hinton.
\newblock Exponential family harmoniums with an application to information
  retrieval.
\newblock In \emph{Neural Information Processing Systems}, 2005.

\bibitem[Xie et~al.(2016)Xie, Lu, Zhu, and Wu]{xie2016theory}
Jianwen Xie, Yang Lu, Song-Chun Zhu, and Yingnian Wu.
\newblock A theory of generative convnet.
\newblock In \emph{International Conference on Machine Learning}, pp.\
  2635--2644, 2016.

\bibitem[Xie et~al.(2017)Xie, Zhu, and Nian~Wu]{xie2017synthesizing}
Jianwen Xie, Song-Chun Zhu, and Ying Nian~Wu.
\newblock Synthesizing dynamic patterns by spatial-temporal generative convnet.
\newblock In \emph{Proceedings of the ieee conference on computer vision and
  pattern recognition}, pp.\  7093--7101, 2017.

\bibitem[Xie et~al.(2018)Xie, Zheng, Gao, Wang, Zhu, and
  Nian~Wu]{xie2018learning}
Jianwen Xie, Zilong Zheng, Ruiqi Gao, Wenguan Wang, Song-Chun Zhu, and Ying
  Nian~Wu.
\newblock Learning descriptor networks for 3d shape synthesis and analysis.
\newblock In \emph{Proceedings of the IEEE conference on computer vision and
  pattern recognition}, pp.\  8629--8638, 2018.

\bibitem[Xie et~al.(2019)Xie, Zhu, and Wu]{xie2019learning}
Jianwen Xie, Song-Chun Zhu, and Ying~Nian Wu.
\newblock Learning energy-based spatial-temporal generative convnets for
  dynamic patterns.
\newblock \emph{IEEE transactions on pattern analysis and machine
  intelligence}, 2019.

\bibitem[Zhao et~al.(2018)Zhao, Kim, Zhang, Rush, and LeCun]{aae}
Junbo Zhao, Yoon Kim, Kelly Zhang, Alexander~M. Rush, and Yann LeCun.
\newblock Adversarially regularized autoencoders.
\newblock In \emph{International Conference in Machine Learning}, 2018.

\bibitem[Zhu et~al.(2015)Zhu, Kiros, Zemel, Salakhutdinov, Urtasun, Torralba,
  and Fidler]{Zhu_2015_ICCV}
Yukun Zhu, Ryan Kiros, Rich Zemel, Ruslan Salakhutdinov, Raquel Urtasun,
  Antonio Torralba, and Sanja Fidler.
\newblock Aligning books and movies: Towards story-like visual explanations by
  watching movies and reading books.
\newblock In \emph{The IEEE International Conference on Computer Vision
  (ICCV)}, December 2015.

\end{thebibliography}
\bibliographystyle{iclr2020_conference}
\newpage
\appendix
\section{Appendix}
\subsection{\label{appendix:qual}Top-k auto-regressive sampling}
In this subsection, we factorize the joint model \textsc{BiT-Base} auto-regressively, and compare its differences with \textsc{Base LM}. Since even estimating the per step probabilities according to Eq.~\ref{eq:stepppl} is too computationally expensive, we further approximate it by only considering the top 128 words predicted by \textsc{Base LM}, where we sample 10,000 completions for each of them to estimate $P(x_t|x_{<t})$. Then we take the top 10 entries and re-normalize, and compare it to the top 10 probabilities of \textsc{Base LM}.

Our initial explorations suggested that the joint model tends to generate fewer repetitions. Therefore we picked a few LM samples where there are repetitions at $x_t$, and use the same context $x_{<t}$ to estimate $P(x_t|x_{<t})$ for the joint model. Some examples of $P(x_t|x_{<t})$ of \textsc{Base LM} and \textsc{BiT-Base} are presented in Table~\ref{tab:qual}. Indeed \textsc{BiT-Base} usually assigns lower probabilities to repetitions even though the top k words remain the same, which is not surprising given that the existence of repetition is a strong indicator of coming from the LM, which would lead to a higher energy value hence lower joint probability.

\begin{table}[!htp]
  \small
    \centering
    \begin{tabular}{p{6cm}cccc}
    \toprule
    Context $x_{<t}$ & Model & Rank     &  $x_t$ & $P(x_t|x_{<t})$  \\
    \midrule
     \multirow{10}{6cm}{\tablefootnote{\small Excerpt from \url{ https://www.swissinfo.ch/eng/multinational-principles_swiss-government-gives-green-light-for-un-migration-accord/44464186}.}... is aimed at setting common benchmarks for orderly migration practices, thereby reducing irregular flows. The Global Compact contains ten guiding principles, including that migrants cannot be settled by countries with better integration policies and a fair and sustainable development. "For the first time in our history, a {\color{red}legally binding} and } &\multirow{5}{*}{\textsc{Base LM}} & 0  & {\color{red}binding}&  {\color{red}0.39}\\
     & & 1  & {\color{red}legally}&  {\color{red}0.33}\\
     & & 2  & internationally&  0.06\\
     & & 3  & comprehensive&  0.05\\
     & & 4  & {transparent}&  0.04\\
     \cline{2-5}
     &\multirow{5}{*}{\textsc{BiT-Base}} & 0  & {\color{red}binding}&  {\color{red}0.18}\\
     & & 1  & {\color{red}legally}&  {\color{red}0.17}\\
     & & 2  & internationally&  0.12\\
     & & 3  & comprehensive&  0.09\\
     & & 4  & {transparent}&  0.08\\
     \midrule
         \multirow{10}{6cm}{
\tablefootnote{\small Excerpt from \url{https://www.forbes.com/sites/markmurphy/2018/05/11/this-is-the-one-piece-of-data-that-85-of-recruiters-are-missing/\#25917c765dad}.}
... companies that land their first-choice candidates 90-100\% of the time, 24\% of them have "thoroughly defined" their high performer attitudes. By contrast, only 1\% of companies that struggle to land their first-choice candidates "thoroughly defined" their high performer attitudes. So it seems pretty clear that companies that land their top-choice candidates are not always as {\color{red}willing} and }
&\multirow{5}{*}{\textsc{Base LM}} & 0  & {able}&  {0.66}\\
     & & 1  & {\color{red}willing}&  {\color{red}0.09}\\
     & & 2  & eager&  0.07\\
     & & 3  & ready &  0.05\\
     & & 4  & {well}&  0.04\\
     \cline{2-5}
     &\multirow{5}{*}{\textsc{BiT-Base}} & 0  & {able}&  {0.75}\\
     & & 1  & {\color{red}willing}&  {\color{red}0.05   }\\
     & & 2  & eager&  0.05\\
     & & 3  & ready&  0.04\\
     & & 4  & {well}&  0.03\\
     \midrule
     \multirow{10}{6cm}{\tablefootnote{\small Excerpt from \url{https://www.reuters.com/article/us-usa-fed-powell/fed-nominee-powell-once-hawkish-now-champions-yellens-focus-on-jobs-idUSKBN1DS0FG}}... it reveals a key skill needed to lead the Fed. "You need to know what you don't know. And you need to be willing to listen when you don't know something," said Karen Dynan, who as an assistant Treasury Secretary in Barack Obama's second administration would regularly meet Fed governors. <EOS> New Delhi Dec 5 The following are mergers under review by India's {\color{red}financial} services and  } &\multirow{5}{*}{\textsc{Base LM}} & 0  & {banking}&  {0.64}\\
     & & 1  & {\color{red}financial}&  {\color{red}0.10}\\
     & & 2  & insurance&  0.09\\
     & & 3  & technology&  0.05\\
     & & 4  & {IT}&  0.04\\
     \cline{2-5}
     &\multirow{5}{*}{\textsc{BiT-Base}} & 0  & {banking}&  {0.92}\\
     & & 1  & {\color{red}financial}&  {\color{red}0.06}\\
     & & 2  & insurance&  0.01\\
     & & 3  & technology&  0.00\\
     & & 4  & {IT}&  0.00\\
         \bottomrule\\
    \end{tabular}
    \caption{Comparison of $P(x_t|x_{<t})$ between \textsc{Base LM} and \textsc{BiT-Base} on a few examples. Repetitions are marked with red. Only the top 5 probabilities are shown.}
    \label{tab:qual}
\end{table}

\subsection{Proof of Theorem 2} \label{appendix:proof}
\setcounter{theorem}{1}
\begin{theorem}
Denote $T_n$ as the empirical estimate of $\log \mathbb{E}_{x\sim P_{LM}} \exp(-E(x))$ with $n$ samples $x_i\sim P_{LM} (i=1,\cdots, n)$, and let $T_n = \log \frac{1}{n}\sum_{i=1}^n\exp(-E(x_i)) $, then $\forall \epsilon > 0$, $\exists N>0$ such that $\forall n > N$ we have
\begin{equation}
      Z_\theta -\epsilon < \mathbb{E}[T_n] < Z_\theta  < \mathbb{E}[(2n-1) T_n - 2(n-1) T_{n-1}] < Z_\theta +\epsilon
\end{equation}
\end{theorem}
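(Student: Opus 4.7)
The plan is to follow the delta-method / jackknife bias-correction argument of Nowozin (2018). Let $W_i = \exp(-E(x_i))$ with $x_i \sim P_{LM}$, so that $Z_\theta = \mathbb{E}[W_i]$ and $T_n = \log \bar W_n$ where $\bar W_n = \tfrac{1}{n}\sum_{i=1}^n W_i$. I will treat the ``$Z_\theta$'' on the right-hand sides of the theorem as $\log Z_\theta$, consistently with the preceding text which identifies $T_n$ as an estimator of the log partition function. I will assume $W_i$ has finite moments up to some fixed order (which is automatic if $E$ is bounded below, as is the case for the architectures used here) and that $\mathrm{Var}(W_i) > 0$ so Jensen's inequality is strict.

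\textbf{Step 1 (middle inequality).} The bound $\mathbb{E}[T_n] < \log Z_\theta$ is just strict Jensen applied to the concave function $\log$: $\mathbb{E}[\log \bar W_n] < \log \mathbb{E}[\bar W_n] = \log Z_\theta$, with strictness because $\bar W_n$ is non-degenerate.

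\textbf{Step 2 (asymptotic expansion of $\mathbb{E}[T_n]$).} I would expand $\log \bar W_n = \log Z_\theta + \log(1+U_n)$ with $U_n = (\bar W_n - Z_\theta)/Z_\theta$ and use the Taylor series $\log(1+u) = u - u^2/2 + u^3/3 - \cdots$. Since $\mathbb{E}[U_n] = 0$, $\mathbb{E}[U_n^2] = \sigma^2/(nZ_\theta^2)$ with $\sigma^2 = \mathrm{Var}(W_i)$, and all higher central moments of $\bar W_n$ are $O(n^{-2})$, taking expectations yields
\begin{equation}
\mathbb{E}[T_n] = \log Z_\theta - \frac{\sigma^2}{2 n Z_\theta^2} + O(n^{-2}).
\end{equation}
This immediately gives $\mathbb{E}[T_n] \to \log Z_\theta$ from below, so $\log Z_\theta - \epsilon < \mathbb{E}[T_n]$ for all $n$ large enough. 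The main technical obstacle is controlling the Taylor remainder uniformly in $n$: one needs a tail bound on $U_n$ (e.g.\ a truncation argument splitting the expectation on $\{|U_n| \le 1/2\}$ and its complement, the latter being negligible by a Chebyshev / higher-moment inequality) to justify discarding the $O(n^{-2})$ remainder. This is the one place where the moment assumption on $W_i$ is used non-trivially.

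\textbf{Step 3 (jackknife corrected estimator).} Applying the same expansion to $T_{n-1}$ (using leave-one-out, whose expectation equals that of $T_{n-1}$ computed on a fresh sample of size $n-1$), I get
\begin{equation}
\mathbb{E}[T_{n-1}] = \log Z_\theta - \frac{\sigma^2}{2 (n-1) Z_\theta^2} + O(n^{-2}).
\end{equation}
Forming the linear combination $(2n-1)\mathbb{E}[T_n] - 2(n-1)\mathbb{E}[T_{n-1}]$ the $\log Z_\theta$ coefficients collapse to $1$, and a short computation cancels the $O(1/n)$ terms up to a residue of $+\sigma^2/(2 n Z_\theta^2)$, giving
\begin{equation}
\mathbb{E}[(2n-1)T_n - 2(n-1)T_{n-1}] = \log Z_\theta + \frac{\sigma^2}{2 n Z_\theta^2} + O(n^{-2}).
\end{equation}
Hence for large $n$ this expectation strictly exceeds $\log Z_\theta$ and lies within $\epsilon$ of it, yielding the remaining two inequalities of the theorem. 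Choosing $N$ large enough that both $O(n^{-2})$ remainders are smaller than $\epsilon$ and smaller in magnitude than the leading $\sigma^2/(2 n Z_\theta^2)$ term simultaneously completes the proof. The hard part, as noted in Step 2, is the uniform control of the higher-order Taylor terms; once that is in place the rest is bookkeeping.
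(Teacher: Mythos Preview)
Your proposal is correct and follows essentially the same route as the paper: both invoke the Nowozin (2018) asymptotic expansion $\mathbb{E}[T_n] = \log Z_\theta - \tfrac{\sigma^2}{2nZ_\theta^2} + O(n^{-2})$ and then form the jackknife combination $(2n-1)T_n - 2(n-1)T_{n-1}$ to flip the sign of the leading bias term, yielding the upper estimator. The one substantive difference is your Step~1: you obtain the strict middle inequality $\mathbb{E}[T_n] < \log Z_\theta$ directly from Jensen's inequality (valid for \emph{every} $n$), whereas the paper deduces it only for sufficiently large $n$ from the sign of the leading expansion term via $\lim_{n\to\infty} n(\log Z_\theta - \mathbb{E}[T_n]) = \tfrac{\sigma^2}{2Z_\theta^2} > 0$. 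Your route is cleaner here and avoids one appeal to the expansion; otherwise the arguments coincide, including the point you flag about needing moment conditions to control the Taylor remainder, which the paper handles by citing Nowozin rather than spelling out.
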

\begin{proof}
From \citet{nowozin2018debiasing} Eq. 35, we can write $\mathbb{E}[T_n]$ as
\begin{multline}
    \mathbb{E}[T_n] = Z_\theta - \frac{\mu_2}{2 \mu^2}\frac{1}{n} + \frac{1}{3\mu^3}\frac{\mu_3}{n^2} - \frac{1}{4\mu^4}(\frac{3}{n^2}\mu_2^2 + \frac{1}{n^3}(\mu_4 - 3\mu_2^2)) \\+ \frac{1}{5\mu^5} (\frac{10}{n^3}\mu_3\mu_2 +\frac{1}{n^4}(\mu_5 - 10 \mu_3\mu_2)) + o(n^{-3})
    \label{eq:taylor}
\end{multline}
Where $\mu = \mathbb{E}[T_n]$, $\mu_k = \mathbb{E}[(T_n-\mu)^k]$. Equivalently,
\begin{equation}
    \mathbb{E}[T_n] = Z_\theta - \frac{\mu_2}{2 \mu^2}\frac{1}{n} + o(n^{-1})
\end{equation}
Therefore, $\lim_{n\to\infty} \mathbb{E}[T_n] = Z_\theta$. So $\forall \epsilon > 0$, $\exists N_1>0$ such that when $n> N_1$, $\mathbb{E}[T_n] > Z_\theta - \epsilon$. On the other hand, $\lim_{n\to \infty}n(Z_\theta - \mathbb{E}[T_n]) = \lim_{n\to \infty}\frac{\mu_2}{2 \mu^2} + o(1) = \frac{\mu_2}{2 \mu^2} > 0$, so $\exists N_2>0$ such that when $n>N_2$ we have $Z_\theta > \mathbb{E}[T_n]$. Up to this point, we have proved that $Z_\theta -\epsilon < \mathbb{E}[T_n] < Z_\theta$.

For the other half part of the proof, using Eq.~\ref{eq:taylor} we have 
\begin{equation}\mathbb{E}[T_n] = Z_\theta - \frac{\mu_2}{2\mu^2}\frac{1}{n} + \frac{c}{n^2} + o(n^{-2})
\end{equation}
where $c$ is a constant. Therefore, $\mathbb{E}[(2n-1) T_n - 2(n-1) T_{n-1}] = (2n-1) \mathbb{E}[T_n] - 2(n-1)\mathbb{E}[T_{n-1}] = Z_\theta + \frac{\mu_2}{2\mu^2}\frac{1}{n} + o(n^{-1})$. Therefore $\lim_{n\to\infty} \mathbb{E}[(2n-1) T_n - 2(n-1) T_{n-1}] = Z_\theta$, hence $\forall \epsilon>0$, $\exists N_3>0$ such that $\forall n>N_3$ $\mathbb{E}[(2n-1) T_n - 2(n-1) T_{n-1}] < Z_\theta + \epsilon$. Furthermore, $\lim_{n\to\infty}n(\mathbb{E}[(2n-1) T_n - 2(n-1) T_{n-1}] - Z_\theta) = \lim_{n\to\infty} \frac{\mu_2}{2\mu^2} + o(1) > 0$, so $\exists N_4>0$ such that when $n>N_4$ we have $\mathbb{E}[(2n-1) T_n - 2(n-1) T_{n-1}>Z_\theta$.

Putting the above together, $\forall \epsilon>0$, let $N=\max\{N_1,N_2,N_3,N_4\}$, then $\forall n>N$,
\begin{equation*}
      Z_\theta -\epsilon < \mathbb{E}[T_n] < Z_\theta  < \mathbb{E}[(2n-1) T_n - 2(n-1) T_{n-1}] < Z_\theta +\epsilon
\end{equation*}
\end{proof}

\subsection{\label{appendix:param}Optimization Settings}

\begin{table}[!hp]
    \centering
    \small
    \begin{tabular}{lccccccc}
    \toprule
   {Model} & fp16  & batch size & warmup steps & max steps & max lr & max grad norm\\
            \midrule
    \textsc{base LM}  & -   & 32 & 2,000  & 180,000 & 0.0001 & 10 \\
    \textsc{RALM} & - & 64 & 2,000 & 180,000 & 0.0001 & 10 \\
    \textsc{BALM} & - & 32 & 2,000 & 180,000 & 0.0001 & 10  \\
    \textsc{joint UniT} & + & 64 & 2,000 & 180,000& 0.0003 & 10\\
    \textsc{joint BiT-Base} & - & 60 & 2,000 & 90,000 & 0.00005  & 0.25 \\
    \textsc{joint BiT-Base*} & - & 60 & 2,000 & 90,000 & 0.00005  & 0.25 \\
    \textsc{joint BiT-Large*} & + & 64 & 2,000 & 90,000 & 0.0003  & 10 \\
    \midrule
    \textsc{base LM-24L}  & -   & 50 & 2,000 & 90,000 & 0.0003 & 0.25 \\
    \textsc{RALM-24L} & - & 28 & 1,000 & 90,000 & 0.00015 & 0.25 \\
    \textsc{BALM-24L} & -   & 28 & 2,000 & 90,000 & 0.0003 & 0.25 \\
    \textsc{joint UniT} (LM-24L) & + & 64 & 2,000 & 180,000& 0.0003 & 10\\
    \textsc{joint BiT-Base} (LM-24L) & - & 60 & 2,000 & 90,000 & 0.00005  & 0.25 \\
    \textsc{joint BiT-Base*} (LM-24L) & - & 60 & 2,000 & 90,000 & 0.00005  & 0.25 \\
    \textsc{joint BiT-Med} (LM-24L) & - & 32 & 2,000 & 90,000 & 0.00005  & 0.25 \\
    \textsc{joint BiT-Large*} (LM-24L) & - & 20 & 2,000 & 90,000 & 0.00005  & 0.25 \\
    \bottomrule\\
    \end{tabular}
    \caption{Optimization settings. We use the same setting for CC-News and Toronto Book Corpus.}
    \label{tab:opt}
\end{table}

\begin{figure}[t]
    \centering
    \includegraphics[width=0.99\linewidth]{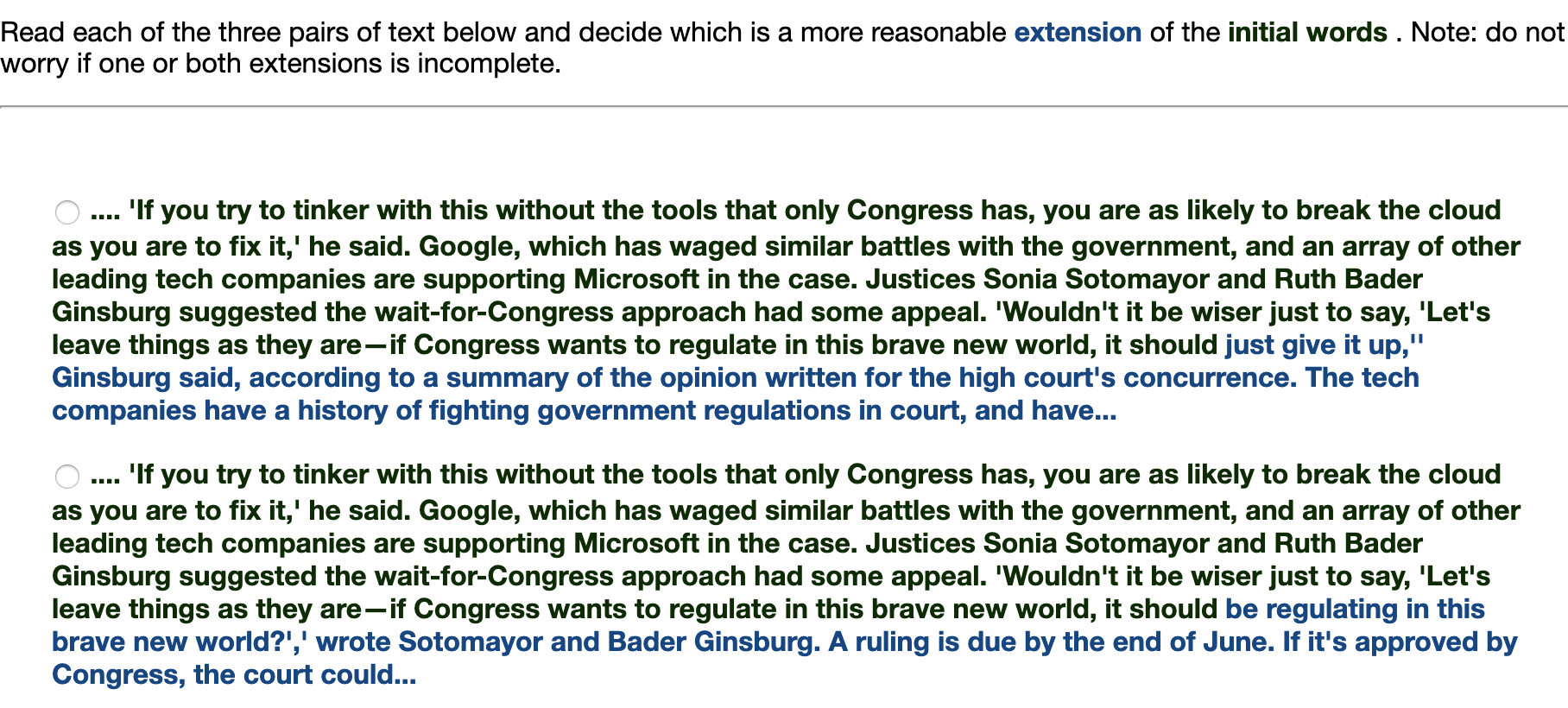}
    \caption{Screenshot of the human evaluation.}
    \label{fig:human}
\end{figure}

The optimization settings are presented in Table~\ref{tab:opt}.

\subsection{Human Evaluation} \label{app:human}

A screenshot of the human evaluation experiments can be found in Fig~\ref{fig:human}. 
Every page asks for 4 comparisons, one of which we know what the ground truth answer is. 
We subsampled 333 sentences from the test set of CC-News, and asked 3 Amazon Mechanical turkers to vote.
We consider one continuation better if it gets more votes. To check the quality of the received ratings, 
we performed a qualification task beforehand, where one of the continuations is real text, 
and we kept the top half performing turkers for further evaluation (corresponding to higher than 66.67\% accuracy 
for discriminating real from LM samples -- for a total of 26 qualified turkers). 
Then in the actual experiment, we use one out of every four comparisons as an attention check and drop  
responses if the turker did not pass the check. 
\newpage
\subsection{\label{appendix:examples}Qualitative analysis}
We present generation examples when our approach \textsc{Base LM} outperforms baseline \textsc{BALM} in Table~\ref{tab:examples1}, and when our approach underperforms in Table~\ref{tab:examples2}. Here the judgment is based on human evaluation when all three turkers unanimously voted in favor of one model over the other.

\begin{table}[!htp]
  \scriptsize
    \centering
    \begin{tabular}{m{5.5cm}m{1.3cm}m{5.9cm}}
    \toprule
    Prefix & Model &  Generation   \\
    \midrule
    \multirow{5}{5.5cm}{
    \tablefootnote{\scriptsize Excerpt from \url{https://chicago.cbslocal.com/2017/11/20/chance-the-rapper-jewel-osco/}}the timing of the donation couldn't be better, and even went on to say that no one is more relevant than Chance right now. Thank you to Doug and @jewelosco for donating \$1,000,000 to The New Chance Arts \& Literature Fund, which will ensure more students have access to arts enrichment education! \#supportcps pic.twitter.com/MXZtpscU5b — SocialWorks (@SocialWorks\_Chi) November 20, 2017 ``In the past, CPS hasn't really gotten a lot of money from non-profits} &\textsc{Base LM} &. And the fact that the money is coming from the government, it makes a big difference,'' he said. ``We're not the biggest donor of arts education, so to\\
     \cline{2-3}
     &\textsc{BALM} & , and it doesn't even have the kind of resources or funding for the arts or literary programs that we need to do the things that people want to support,'' said Chance.\\
     \cline{2-3}
     &\textsc{BiT-Base-Worst} &  ,” said Cpl. Chris Bissonnette of the Stony Brook Police Department. “So the reason is, as I think about the impact of these grants and these grants to schools \\
     \cline{2-3}
     &\textsc{BiT-Base} &  to fund programming. So, I think that it's a great time for us to do something to support that,'' said Chance. ``We want to make sure that our programs \\
     \cline{2-3}
     &\textsc{Gold} &  and charitable arms and organizations because there's been a past of corrupt activity in the city of Chicago,'' Chance said. ``And so, when it comes down to it, it \\
     \midrule
     \multirow{5}{5.5cm}{\tablefootnote{\scriptsize Excerpt from  \url{https://chiefswire.usatoday.com/2017/12/10/halftime-analysis-for-chiefs-vs-raiders-week-14/}}quarter. The penalties are still somewhat of a problem but tight ends Travis Kelce and Demetrius Harris made some impressive grown-man football plays. -It was nice to see running back Kareem Hunt get in the end zone for the first time since Week 3. He must feel good to end the drought. -Kelce was visibly frustrated on the sidelines and rightfully so. The officials seem to be leaning toward Oakland with calls today. Still, Kelce should've caught that easy pass that he dropped. -Quarterback Alex Smith has delivered a couple of nice deep balls to } &\textsc{Base LM} & get his hands on the ball this week. He threw two touchdown passes on Thursday. He should get another touchdown as the season goes on. He's got a good chance to be one of \\
     \cline{2-3}
     &\textsc{BALM} &  the Chiefs and the Chiefs defense has made them pay for it. The offense has done a nice job with the ball, and they's been a factor. It's nice to have\\
     \cline{2-3}
     &\textsc{BiT-Base-Worst} &  tight end Martavis Bryant to get the ball back, but the throw from Smith to Davis Webb is incomplete, which could lead to an incompletion. -The Chiefs are now 5-2 after winning \\
     \cline{2-3}
     &\textsc{BiT-Base} &  help the Chiefs win this game. His completion of a deep ball to wide receiver Sammy Watkins is a nice sign. -Jalen Ramsey should be back in the lineup today. The Jaguars are going \\
     \cline{2-3}
     &\textsc{Gold} &  receivers Albert Wilson and Tyreek Hill. The footwork by Wilson on the deep ball was amazing and Hill's ball-tracking skills were on point as usual. Good things happen when Smith trusts \\
     \midrule
    \multirow{5}{5.5cm}{\tablefootnote{\scriptsize Excerpt from \url{https://www.glamour.com/story/kourtney-kardashian-intense-detox}}has an amazing ability to detoxify itself namely by the liver, kidneys, and skin,'' she says. Detox programs and diets do sometimes make people feel better, but that happens more often if you're eating and/or drinking things that make feel like crap (something it doesn't sound like Kardashian was doing in the first place). And if, somehow, you discover that you have elevated levels of mercury or lead in your body like Kardashian said she had, it's important to figure out why, Dr. Wider says. ``Finding the source is extremely important for any heavy metal, so you} &\textsc{Base LM} & can't say 'no or I've never seen something of that value, so I'll try and find it again and see what happens.'' So don't be afraid to find a way to detox or \\
     \cline{2-3}
     &\textsc{BALM} &  want to get the full picture and not just have to look at the whole picture,'' she says. ``It can be a great idea to talk to your doctor to determine what's causing your symptoms."\\
     \cline{2-3}
     &\textsc{BiT-Base-Worst} &  can get rid of toxins that are found in foods like whole wheat bread, nuts like walnuts, walnuts and walnuts," she says. "If a source of the metals has elevated levels of \\
     \cline{2-3}
     &\textsc{BiT-Base} &  can't say, 'Oh my God, it's a lead, but I'm fine.' But it's definitely a problem in our bodies,'' she says. ``And if it's a problem in \\
     \cline{2-3}
     &\textsc{Gold} &  can lower your exposure over time,'' she says. Meaning, a detox diet probably isn't going to help you as much as, say, reducing your intake of foods that are high in mercury (like\\
         \bottomrule\\
    \end{tabular}
    \caption{Example generations when \textsc{BIT-BASE} outperforms \textsc{BALM} according to human evaluation. \textsc{BiT-Base-Worst} shows the LM sample with the highest energy score.}
    \label{tab:examples1}
\end{table}
\newpage
\begin{table}[!htp]
  \scriptsize
    \centering
    \begin{tabular}{m{5.5cm}m{1.3cm}m{6.0cm}}
    \toprule
    Prefix & Model &  Generation   \\
    \midrule
    \multirow{5}{5.5cm}{\tablefootnote{\scriptsize Excerpt from \url{https://www.libraryjournal.com/?detailStory=lakewood-oh-mom-sues-library-over-teens-rough-treatment}}but as the audio only came on halfway through the incident, it did not record his initial words to the girl or her response. It was also dropped in the hallway during their interaction, so it did not record any of what happened once the two left the library. Jones brought her outside to the library's front steps, where he waited for responding officers to arrive. They did not call for an ambulance, according to the lawsuit. Rather, when the officers arrived, they told her to spit blood on the grass instead of the concrete, and to move from the steps to a bench. Some 20 minutes} &\textsc{Base LM} &later, she was dead, according to the lawsuit. ``It's not uncommon for an officer to spit blood on an unarmed, vulnerable adult. The use of spit is illegal in a public place,''\\
     \cline{2-3}
     &\textsc{BALM} &later, Jones was arrested for disorderly conduct and released after posting bond, the suit said. She was arrested again on May 2. In the suit, Jones is seeking more than \$1 million. A\\
     \cline{2-3}
     &\textsc{BiT-Base-Worst} &  later, he walked up on the ground, where they began to yell and yell again. When Jones refused, the officers ordered him not to leave or leave any more. Jones then pulled his gun on\\
     \cline{2-3}
     &\textsc{BiT-Base} &  after the officers' arrival, Jones and the girl left the library and headed to the parking lot of a nearby apartment complex. ``This is the type of behavior that is most likely to cause harm to\\
     \cline{2-3}
     &\textsc{Gold} &  later, J.G's mother arrived and took her daughter to Cleveland Clinic in Lakewood, where she was treated for a dislocated jaw, a head injury, and an injury to her knee. \\
     \midrule
     \multirow{5}{5.5cm}{\tablefootnote{\scriptsize Excerpt from \url{https://www.sun-sentinel.com/community/delray-sun/fl-drf-village-academy-steam-0418-20180410-story.html}}, Bronson said. ``The initiative provides a variety of supports to early childhood programs' children, families and staff. The resources provided through this partnership increase the quality of the participating programs, which benefits the community and impacts our future in such a positive way,'' Scott said. Visit PNCGrowUpGreat.com/donorschoose. $\backslash$nHere are Varsity sports headlines for April 13, 2018. Refresh to get the latest as we add news throughout the night as we collect scores: Best of the best in track and field Our Sentinel coverage area top performers lists for girls track and field  } &\textsc{Base LM} &  at the Varsity Track \& Field Invite.$\backslash$nThe U.S. Army Corps of Engineers has approved \$2 billion in construction work along the U.S.-Mexico boundary as a way to \\
     \cline{2-3}
     &\textsc{BALM} &  . Check back frequently for updates. The Sentinel also has a feature on the boys basketball and wrestling teams. Boys golf The Sentinel boys golf and wrestling teams have been one of those teams who has had some\\
     \cline{2-3}
     &\textsc{BiT-Base-Worst} &  .$\backslash$nLONDON, April 13 (IFR) - The following are some of the main factors expected to affect Spanish corporate bond prices on Wednesday. BAML: Spanish sovereign wealth fund PPV \\
     \cline{2-3}
     &\textsc{BiT-Base} &  .$\backslash$nA few weeks back, it's been rumored that the HTC Desire was going to be the company’s last flagship phone, and now, a new leak has confirmed that it \\
     \cline{2-3}
     &\textsc{Gold} & and boys track and field are updated going into the Saturday district meets. The season is heating up with more district and region races coming up next week. Click these links for girls top performers and boys top\\
     \midrule
    \vspace{-12pt}\multirow{5}{5.5cm}{
    \tablefootnote{\scriptsize Excerpt from \url{https://seekingalpha.com/article/4215142-apple-looks-to-recharge-tech-sector}}leaders meeting November 17-18 in Papua New Guinea as potential Xi-Trump meet dates. If all else fails, Trump and Xi are also expected to meet for a bit at the G20 meeting at the end of November. On the economic calendar next week, the update on jobs and the U.S. trade deficit are the headliners on November 2. Notable earnings reports: Akamai Technologies (NASDAQ:AKAM), Mondelez International (NASDAQ:MDLZ) and Olin Corp. (NYSE:OLN) on October 29; Under Armour (NYSE:} &\textsc{Base LM} & UAA), General Motors (NYSE:GM) on November 4; and Procter \& Gamble (NYSE:PG) for October. On the retail front, Lowe's Companies (NYSE:L \\
     \cline{2-3}
     &\textsc{BALM} &  UA) on October 30; CVS Health (NASDAQ:CVS) on November 27; Intel Corporation (NASDAQ:INTC) on October 28; and Verizon Communications (NYSE:V\\
     \cline{2-3}
     &\textsc{BiT-Base-Worst} &  UAA) and Adidas (OTCPK:ADDYYF; OTCQX:ADDYYFGF; OLYMP), on November 30; and Qualcomm Incorporated (NASDAQ: \\
     \cline{2-3}
     &\textsc{BiT-Base} &   UAA), Johnson Controls (NYSE:JCI) and Cisco Systems (NASDAQ:CSCO) on November 6.$\backslash$nA woman who had to have her nose and mouth taped as punishment \\
     \cline{2-3}
     &\textsc{Gold} &  UAA), eBay (NASDAQ:EBAY), General Electric (NYSE:GE), Coca-Cola (NYSE:KO), Pfizer (NYSE:PFE) and Electronic Arts (NAS\\
         \bottomrule\\
    \end{tabular}
    \caption{Example generations when \textsc{BIT-BASE} underperforms \textsc{BALM} according to human evaluation. \textsc{BiT-Base-Worst} shows the LM sample with the highest energy score.}
    \label{tab:examples2}
\end{table}

\end{document}